\newcommand{\E}{\mathbb{E}}
\newcommand{\I}{\mathbb{I}}
\newcommand{\Var}{\operatornamewithlimits{Var}}
\newcommand{\R}{\mathbb{R}}
\newcommand{\argmax}{\operatornamewithlimits{argmax}}
\newcommand{\compilehidecomments}{false}
	\newcommand{\wei}[1]{}
	\newcommand{\siwei}[1]{}
	\newcommand{\wei}[1]{{\color{blue!50!black}  [\text{Wei:} #1]}}
	\newcommand{\siwei}[1]{{\color{brown!60!black} [\text{Siwei:} #1]}}
\newtheorem{mycorollary}{Corollary}
\newtheorem{theorem}{Theorem}
\newtheorem{mydef}{Definition}
\newtheorem{assumption}{Assumption}
\newtheorem{proposition}{Proposition}
\newcommand{\rad}{{\it rad}}
\icmltitlerunning{Pure Exploration Bandit Problem with General Reward Functions Depending on Full Distributions}
\begin{document}

\onecolumn

\icmltitle{Pure Exploration Bandit Problem with General Reward Functions Depending on Full Distributions}



\icmlsetsymbol{equal}{*}

\begin{icmlauthorlist}
\icmlauthor{Siwei Wang}{siwei}
\icmlauthor{Wei Chen}{wei}
\end{icmlauthorlist}

\icmlaffiliation{siwei}{Tsinghua University, Beijing, China}
\icmlaffiliation{wei}{Microsoft Research, Beijing, China}

\icmlcorrespondingauthor{Siwei Wang}{email@yourdomain.edu}
\icmlcorrespondingauthor{Wei Chen}{weic@microsoft.com}


\vskip 0.3in

\begin{abstract}
	
	
In this paper, we study the pure exploration bandit model on general distribution functions, which means that the reward function of each arm depends on the whole distribution, not only its mean. We adapt the racing framework and LUCB framework to solve this problem, and design algorithms for estimating the value of the reward functions with different types of distributions. Then we show that our estimation methods have correctness guarantee with proper parameters, and obtain sample complexity upper bounds for them. Finally, we discuss about some important applications and their corresponding solutions under our learning framework.

\end{abstract}

\section{Introduction}\label{Section_Int}

Pure exploration bandit problem can be described as a game between a player (or a learner) and the environment. The player have $m$ arms to pull.
After an arm $i$ is pulled, the player receives an observation $X_i$ independently sampled from
	a fixed unknown distribution $D_i$ corresponding to arm $i$ by the environment.
There is a known reward function $H$ for the distributions $D_i$'s.
The player needs to design a strategy for selecting arms to pull and for deciding when to stop and which
	arm to output based on his observations,
	such that the output arm $i$ has the largest reward value $H(D_i)$.

In pure exploration bandit model, the player cannot guarantee to always output the correct answer
	if he only gets finite number of observations.
One type of strategies is to maximize the success probability, given the constraint on 
	the number of observations, referred to as the fixed-budget model \cite{Audibert2010Best,Bubeck2012Multiple}.
Another type of strategies is to constrain the error probability $\delta$, and 
	try to minimize the number of samples that the learner needs to obtain, referred
	to as the fixed-confidence model \cite{evendar2006action}.

Pure exploration bandit problem can be used to solve many online optimization problems, thus it gets more and more attention in recent years. Most existing studies on pure exploration bandit model
	the reward function $H$ to be the mean function, i.e., $H(D) = \E_{X\sim D}[X]$.
However, in many real-world applications, the function $H(D)$ depends on not only its mean but the entire
	distribution $D$.
One example is that $H(D)$ measures the similarity of distribution $D$ to a known distribution $G$, and we aim to find among unknown arms the one that has
	the distribution most similar to $G$.
Such reward function can be used in target selection, e.g. finding a special target in the ocean using satellite images:
	we divide the ocean into small areas, 
	and try to figure out which area contains the target. 
In each time step, the satellite can choose one of the areas to take a photo. 
Then this photo can be viewed as a random sample of a variable that follows some unknown distribution based on the geographical conditions of that area. 
Moreover, we know what the target looks like, thus we have a known probability distribution $G$ for the area that would contain the target,
	where the uncertainty in the distribution models the weather and other geographical conditions that could affect the photo image.
So what we want is to efficiently sample photos from different areas 
to find out the area whose corresponding distribution is the closest to our known distribution $G$. 	
While the mean function as the reward function $H$ has been well studied, the case of
	general reward function has not been well addressed in the literature.


In this paper, we study the novel problem setting where the reward function depends
	on the entire underlying unknown distribution $D_i$.
To solve this problem, we consider two frameworks in the fixed-confidence setting: the racing framework \cite{maron1997the,evendar2006action,kaufmann2013information}
 and LUCB framework \cite{kalyanakrishnan2012PAC}. 
 Our main contribution is 
to design proper algorithms to estimate the reward $H(D_i)$ given a set of observations from distribution $D_i$. We divide it into three types based on its continuity property, and concentrate on the case that $H$ follows Lipschitz continuity with total variation distance. 
We show that the sample complexity for using our estimation algorithm in those frameworks is upper bounded by $O(m\log(1/\delta) / \Delta^2 + c)$, where $\delta$ is the error toleration rate, $\Delta$ is the minimal gap between the maximized $H(D_i)$ and other $H(D_j)$'s, and $c$ is a constant that may depend on $\Delta,m$ but does not depend on $\delta$.  
This means that when $\delta$ tends to 0, the complexity is asymptotically optimal. 


We highlight the difference between our general model and the existing pure exploration model here. Our model concentrates on learning not only a single parameter, but the entire shape of every distribution. In the traditional pure exploration model that only concerns finding an
	arm with the best mean reward, 
	each observation of the distribution is an independent and unbiased estimator of the mean reward, thus 
	they only need to repeat this procedure and then take average 
so that the expected reward is learned precisely. 
In our model such independent and unbiased estimator may not exist. 
For example, when we want to find a distribution with minimal distance to the
	Gaussian distribution $\mathcal{N}(0,1)$, a single observation means nothing about 
	the distribution distance, and one cannot obtain an unbiased estimator of that distance based on the single observation. 
Only when we take a large set of observations into consideration, we can obtain an estimator (with small bias) and a corresponding confidence interval for that distance. 

\subsection{Related works}

Stochastic multi-armed bandit (MAB) model \cite{Berry1985Bandit,Sutton1998Reinforcement} describes the trade-off between exploitation and exploration, different from pure exploration that only concentrates on exploration. It is the origin of pure exploration bandit model, and the algorithms for
pure exploration bandit such as the racing algorithm and LUCB algorithm follow the idea of upper confidence bound algorithm \cite{gittins1989multi,Auer2002Finite} for MAB problems.

Pure exploration bandit model has been researched in many papers.
As we have mentioned, most of those researches focus on using mean value as the reward of any given distribution. Their results follow the optimal lower bound $O(m\log(1/\delta) / \Delta^2)$ \cite{Audibert2010Best}. 
Further researches concentrate on finding the best $k$ arms \cite{Kalyanakrishnan2011Efficient}, or choosing an arm set satisfying some combinatorial structures \cite{Chen2014Combinatorial}. 
Another aspect is to consider approximate results.
For example, PAC \cite{evendar2006action} considers the case of finding a single arm $i$ with $H(D_i) \ge \max_i H(D_i)- \epsilon$. To the best of our knowledge, we are the first to consider general reward functions in pure exploration bandit.

In \cite{ChenGeneral16}, the authors considered a special kind of general reward functions on Combinatorial MAB models. 
Besides the difference that they work on the cumulative regret objective while we work on the pure exploration
	objective, our setting is still very different with theirs. 
\cite{ChenGeneral16} focused on looking for the best combinatorial arm set based on some restricted reward functions, while our main concern is to consider more general distribution reward functions.
For example, we can deal with the case of finding the distribution with minimal total variation distance to a target, but they cannot achieve the same goal.

Quantile-based MAB model considers a special kind of reward functions \cite{szorenyi2015qualitative,david2016pure}. Different with classic MAB problems that care about the mean of each distribution, in quantile-based MAB model, the reward is measured by the $\tau$-quantile of each distribution. Compare with this model, our setting is still more general, since the $\tau$-quantile can be viewed as a function of the whole distribution as well.

Another similar topic is Chernoff test \cite{Chernoff1959Sequential}. 
The original Chernoff test is a game with two hypothesis $H_1,H_2$ and two unknown distributions $D_1,D_2$. The player knows under the first hypothesis, $D_1 = G$, $D_2 = G'$; while the second means $D_1 = G'$, $D_2 = G$, but he does not know which one is true.
The distributions $G$ and $G'$ are known to the player.
He needs to observe the distributions $D_1$ and $D_2$ multiple times, and make the decision that which hypothesis is correct. 
In our setting, we can choose the distribution function $H(D) = \E_{X\sim D}[\log g(X)/\log g'(X)]$, where $g$ and $g'$ are the probability density (mass) function of $G$ and $G'$. Then $H(G) - H(G') = KL(G,G') + KL(G',G)\ge 0$, where $KL$ denotes the KL-divergence between two distributions. If the output is $D_1$, we know that $H(D_1) > H(D_2)$, which means hypothesis $H_1$ is correct. 
Using our method would achieve the same sample complexity as the Chernoff test.
However, it is unclear how to extend Chernoff test to deal with the general scenario in our
	paper because our model selects arms among arbitrary unknown distributions, not known distributions.


\section{Preliminaries}

\subsection{Models and definitions}

A pure exploration bandit problem with general distribution functions can be modelled as a tuple $(A,D,H,\delta)$. $A = \{1,2,\cdots,m\}$ is the set of all arms. $D = \{D_1,\cdots,D_m\}$ is the set of corresponding probability distributions of arms in $A$. $H$ is a reward function $\Phi \to \R$, where $\Phi$ is the set of all possible probability distributions, and $\delta$ is the error probability. At each time slot, a policy $\pi$ needs to choose an arm $i(t)$ based on the previous observations, and then observe a random variable $X(t) \sim D_{i(t)}$. The random variables $\{X(\tau)\}_{\tau=1}^t$ are independent. 
The policy $\pi$ can also choose to stop the game and output a target arm whenever he wants.
We use $T_\pi$ to denote the random variable of the stopping time of policy $\pi$, and
	$S_\pi$ to be the random variable of the output arm. 
Then the goal of the player is to design a policy $\pi$ such that with high probability, it can find the arm $i$ with the maximum value $H(D_i)$, i.e., $\Pr[S_\pi \in \argmax_i H(D_i)] \ge 1 - \delta$.
Under this constraint, he wants sample complexity $T_\pi$ as small as possible.





As commonly assumed in pure exploration bandit problems,
	we assume that there is a unique optimal arm $i^*$, i.e., $H(D_{i^*}) = \max_i H(D_{i})$ and $H(D_{i^*}) > \max_{i\ne i^*} H(D_{i})$. By this assumption, we can define the gap $\Delta_i$ as following:

\begin{displaymath}
    \Delta_i = \left\{
    \begin{array}{ll}
    H(D_{i^*}) - H(D_i) & \textrm{if $i\ne i^*$}\\
    H(D_{i^*}) - \max_{j\ne i^*} H(D_j)& \textrm{if $i= i^*$}
    \end{array} \right.
\end{displaymath}

Since the optimal solution is unique, $\forall i \in A, \Delta_i > 0$.


\begin{assumption}\label{Assumption_H_con}
	There exists a constant $B$, such that for any distribution $G,G'$, we have
  \begin{equation*}
    |H(G)-H(G')| \le B \cdot \Lambda(G,G'),
  \end{equation*}
where $\Lambda(\cdot, \cdot)$ denotes some type of distribution distance, which is positive, symmetric and follows the triangle inequality.
\end{assumption}

Assumption \ref{Assumption_H_con} is to ensure the continuity of function $H$. Most functions in real applications satisfy this assumption.










\subsection{Example of applications}

In this section, we discuss some important applications of our model.

\subsubsection{Selecting the distribution with the largest $\tau$-quantile}

Quantile-based multi-armed bandits model has attracted people's attention in recent years \cite{szorenyi2015qualitative,david2016pure}. This model aims to find out the distribution with largest $\tau$-quantile, and it is widely used in applications such as clinical trials and risk assessment \cite{szorenyi2015qualitative,schachter1997irreverent}. 

\subsubsection{Selecting the closest distribution to target distribution}

Finding out a target distribution among several candidates is an important problem in hypothesis testing. People propose many frameworks such as Chernoff test and target scanning \cite{Chernoff1959Sequential,bessler1960theory,zigangirov1966problem,dragalin1996simple}, and they are widely used in quality control \cite{pochampally2014six} and medical problems \cite{larsen1976statistics}. 
Except for finding out a distribution that is identical with the target, looking for the distribution that is the closest to the target is also an important question, which can be used in applications such as target selection or target searching. Our results provide a novel solution for this problem as well.

\subsubsection{Selecting the distribution that follows the target type}

People are also interested in the question of looking for distributions that follow a special type, e.g., finding out a Gaussian distribution or an exponential distribution among several candidates.
This problem is another kind of hypothesis test and is commonly used in data verification \cite{avenhaus1996compliance,bensefia2004handwriting}.
Traditional solution chooses to use Kolmogorov-Smirnov test to check which distribution is more likely to be a Gaussian one \cite{lilliefors1967kolmogorov}. However, Kolmogorov-Smirnov test is based on the similarity of cumulative distribution functions and may fail in some special cases. Our model, on the other hand, can be used to solve this problem by finding out the distribution whose total variation distance with a Gaussian distribution is the smallest one, which can be more precise in most of the cases.

\section{Algorithmic frameworks}\label{Sec_Frame}

In this section, we present the racing framework \cite{maron1997the,evendar2006action,kaufmann2013information}
 and LUCB framework \cite{kalyanakrishnan2012PAC} proposed to solve the traditional pure exploration bandit problem. 
We make adjustments to  these frameworks in order to support general reward functions depending on the full distributions.
In particular, we replace the estimation function in these frameworks with $\textbf{Estimate}$, and abstract the sample-size function
	$n_H(\delta,\Delta)$ and the gap function $\Delta_H(\delta,n)$, which will be instantiated 
	for different $H$ functions in Section~\ref{Section_Esti}.
We prove that our adjustment does not influence the correctness guarantee if $n_H(\delta,\Delta)$ and $\Delta_H(\delta,n)$  
	are chosen appropriately. 

\subsection{Racing framework}

The racing framework is shown in Algorithm \ref{Algorithm_Elim_F}. The basic idea of the racing framework is to divide the game into several phases. In each phase, the remaining arms will be pulled for the same number of times so that the size of their confidence intervals on $H(D_i)$'s is decreased by a half. 
Then at the end of each phase, the arms with larger gap $\Delta_i$ can be eliminated, since their upper confidence bounds are smaller than the lower confidence bound of a particular arm. 

An important parameter in the racing framework is the number of times that an arm should be pulled in the $k$-th phase. In this paper, we define $n_H(\delta, \Delta)$ to be the value such that for any distribution $D$, 
\begin{equation}\label{eq_10}
\forall n \ge n_H(\delta, \Delta), \Pr[|H(D) - \textbf{Estimate}(O(D,n))| \ge \Delta] \le \delta,
\end{equation}
where $O(D,n)$ is a set of $n$ i.i.d. samples drawn from $D$. If at the end of the $k$-th phase, we want the confidence radius of the estimated $H(D_i)$ to be $\rad(k)$ and the confidence level to be $\delta(k)$, then we only need 
$n_H(\delta(k), \rad(k))$ observations for all the remaining arms. 

\begin{algorithm}[t]
    \centering
    \caption{{\sf Racing Framework}}\label{Algorithm_Elim_F}
    \begin{algorithmic}[1]
    \STATE \textbf{Input:} The set of arms $A$, the error probability $\delta$.
    \STATE \textbf{Init:} $k = 0$.
    \WHILE {$|A| > 1$}
    \STATE $k \gets k + 1$
    \STATE $\rad(k) = 2^{-k}$; $\delta(k) ={\delta \over 2|A|k^2}$.
    \STATE Pull each arm $i \in A$ until there are $n_H(\delta(k), \rad(k))$ observations from $D_i$, let $O_i(k)$ be the set of these observations.
    \STATE For each arm $i \in A$, $\hat{H}_i(k) = \textbf{Estimate}(O_i(k))$.
    \STATE For all arms $j \in A$, if there exists any $i \in A$ such that $\hat{H}_i(k) - 2\rad(k) \ge \hat{H}_j(k)$, delete arm $j$ from $A$.
    \ENDWHILE
    \STATE \textbf{Return:} The remaining arm in $A$.
    \end{algorithmic}
\end{algorithm}


\begin{restatable}{proposition}{PropOne}\label{Prop_Elim_F}
With probability at least $1-\delta$, Algorithm \ref{Algorithm_Elim_F} works correctly, and the sample complexity $T_R$ satisfies:

\begin{equation*}
T_R \le \sum_{i=1}^m n_H\left({\delta \over 2m\log^2\left({8\over \Delta_i}\right)}, {\Delta_i \over 8}\right).
\end{equation*}
\end{restatable}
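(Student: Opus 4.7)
The plan is to control a single concentration event under which every per-phase estimate $\hat H_i(k)$ is accurate, and then read off both correctness and the sample-complexity bound from it.

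First, I would define the good event $E$ to be the event that $|\hat H_i(k) - H(D_i)| \le \rad(k)$ for every phase $k \ge 1$ and every arm $i$ that is still in $A$ at the start of phase $k$. By the defining property (\ref{eq_10}) of $n_H$, the failure probability of any fixed arm's estimate at phase $k$ is at most $\delta(k)=\delta/(2|A|k^2)$. Since at most $|A|$ arms are present at phase $k$, a union bound over arms and phases gives
\begin{equation*}
\Pr[\bar E] \le \sum_{k=1}^{\infty} |A|\cdot \frac{\delta}{2|A|k^2} \;=\; \frac{\delta}{2}\sum_{k=1}^{\infty}\frac{1}{k^2} \;=\; \frac{\pi^2}{12}\delta \;<\; \delta.
\end{equation*}

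Second, assuming $E$, I would verify correctness by showing that $i^*$ is never eliminated. For any $i\ne i^*$ and any phase $k$, the elimination rule $\hat H_i(k)-2\rad(k)\ge \hat H_{i^*}(k)$ combined with $\hat H_i(k)\le H(D_i)+\rad(k)$ and $\hat H_{i^*}(k)\ge H(D_{i^*})-\rad(k)$ would force $H(D_i)\ge H(D_{i^*})$, contradicting uniqueness of the optimum. Hence $i^*\in A$ throughout and is the returned arm.

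Third, I would bound the last active phase of every arm. For a suboptimal arm $j$, as soon as $4\rad(k)\le\Delta_j$, the symmetric argument yields $\hat H_{i^*}(k)-2\rad(k)\ge H(D_{i^*})-3\rad(k)\ge H(D_j)+\rad(k)\ge \hat H_j(k)$, so $j$ is eliminated at phase $k$ at the latest. Taking $k_j$ to be the smallest such phase, minimality gives $\rad(k_j)\ge\Delta_j/8$ and $k_j\le \log_2(8/\Delta_j)$. For $i^*$ the last active phase is the elimination phase of the runner-up, again bounded by $\log_2(8/\Delta_{i^*})$. Since arm $i$ receives exactly $n_H(\delta(k_i),\rad(k_i))$ observations in its last active phase, and $n_H$ is monotone non-increasing in both arguments (by its definition in (\ref{eq_10})),
\begin{equation*}
T_R \;\le\; \sum_{i=1}^{m} n_H\!\left(\delta(k_i),\rad(k_i)\right) \;\le\; \sum_{i=1}^{m} n_H\!\left(\frac{\delta}{2m\log^2(8/\Delta_i)},\; \frac{\Delta_i}{8}\right),
\end{equation*}
which is the claimed bound.

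The main obstacle I expect is purely bookkeeping: keeping the factor-of-two constants aligned so that exactly the ``$/8$'' on $\Delta_i$ and the ``$\log^2(8/\Delta_i)$'' in the denominator of the confidence emerge (rather than, say, $/4$ or $/16$), and relying on the natural monotonicity of $n_H$ in both arguments, a property that the estimators constructed in Section~\ref{Section_Esti} must be shown to satisfy in order for the final inequality to hold without an extra multiplicative loss.
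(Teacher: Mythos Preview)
Your proposal is correct and follows essentially the same approach as the paper: define the single good event, union-bound over phases and surviving arms, show $i^*$ survives, and bound each arm's last active phase via $4\,\rad(k)\le\Delta_j$. You are in fact a bit more careful than the paper in two places---you compute the $\pi^2/12$ factor explicitly, and you make explicit the monotonicity of $n_H$ in both arguments that is needed to pass from $n_H(\delta(k_i),\rad(k_i))$ to $n_H\!\big(\tfrac{\delta}{2m\log^2(8/\Delta_i)},\tfrac{\Delta_i}{8}\big)$; the paper uses this step without comment (note, though, that monotonicity is not literally forced by Eq.~(\ref{eq_10}) as stated, but holds for every concrete $n_H$ instantiated in Section~\ref{Section_Esti}).
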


\subsection{LUCB framework}

LUCB framework is shown in Algorithm \ref{Algorithm_LUCB_F}. The idea is to choose the one with the larger uncertainty between the empirically best arm and the arm that has the largest potential, 
and stop only if there exists an arm whose lower confidence bound is larger than other arms' upper confidence bounds. 

In LUCB framework, 
we define $\Delta_H(\delta,n)$ to be the value such that for any distribution $D$,
\begin{equation}\label{eq_11}
\Pr[|H(D) - \textbf{Estimate}(O(D,n))| \ge \Delta_H(\delta,n)] \le \delta.
\end{equation} 
Then in any time slot $t$, we can obtain the confidence bounds for $H(D_i)$ by $\textbf{Estimate}(O_i(t))$ and $\Delta_H(\delta(t), N_i(t))$, where $\delta(t)$ is the confidence level, and $N_i(t)$ is the number of observations on arm $i$ until time $t$.


\begin{algorithm}[t]
    \centering
    \caption{{\sf LUCB Framework}}\label{Algorithm_LUCB_F}
    \begin{algorithmic}[1]
    \STATE \textbf{Input:} The set of arms $A$, the error probability $\delta$.
    \STATE \textbf{Init:} $t = 0$, let $O_i(t)$ be the set of all observations of arm $i$ until time step $t$.
    \STATE Observe each arm once, $t \gets t+|A|$.
    \STATE For each arm $i$, $\hat{H}_i(t) = \textbf{Estimate}(O_i(t))$, $\rad_i(t) = \Delta_H({\delta \over 2mt^2},|O_i(t)|)$.
    \WHILE {\textbf{true}}
    \STATE $i_1(t) = \argmax_{i\in [m]} \hat{H}_i(t)$.
    \STATE $i_2(t) = \argmax_{i\in [m],i\ne i_1(t)} (\hat{H}_i(t) + \rad_i(t) ) $.
    \IF {$\hat{H}_{i_1(t)}(t) - \rad_{i_1(t)}(t)\ge \hat{H}_{i_2(t)}(t) + \rad_{i_2(t)}(t)$}
    \STATE \textbf{Return:} $i_1(t)$.
    \ELSE
    \STATE Pull arm $i(t) = \argmax_{i \in \{i_1(t),i_2(t)\}} \rad_i(t)$.
    \STATE $t \gets t+1$.
    \STATE For each arm $i$, $\hat{H}_i(t) = \textbf{Estimate}(O_i(t))$, $\rad_i(t) = \Delta_H( {\delta \over 2mt^2},|O_i(t)|)$.
    \ENDIF
    \ENDWHILE
    \end{algorithmic}
\end{algorithm}

\begin{restatable}{proposition}{PropTwo}\label{Prop_LUCB_F}
With probability at least $1-\delta$, Algorithm \ref{Algorithm_LUCB_F} works correctly, and the sample complexity $T_L$ satisfies:

\begin{equation*}
T_L \le \min_{t > 0} \{t > \sum_{i=1}^m n_H({\delta \over 2mt^2}, {\Delta_i\over 4})\}.
\end{equation*}
\end{restatable}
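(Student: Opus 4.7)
The plan is to prove the two claims—correctness and the sample-complexity bound—separately, after first establishing a high-probability ``good event'' on which every estimator respects its claimed confidence interval. I would define
\[
\mathcal{E} = \{\,\forall t \geq 1,\ \forall i \in A:\ |H(D_i) - \hat{H}_i(t)| \leq \rad_i(t)\,\}
\]
and bound $\Pr[\mathcal{E}^c]$ by a union bound parameterized by $(i,n)$ rather than $(i,t)$. For each arm $i$ and each sample count $n \geq 1$, the bad event for arm $i$ at the unique time $t$ when $N_i(t)=n$ is, using $t \geq n$ and monotonicity of $\Delta_H$ in its first argument, contained in $\{|H(D_i)-\hat{H}_{i,n}| > \Delta_H(\delta/(2mn^2),n)\}$, which by (\ref{eq_11}) has probability at most $\delta/(2mn^2)$. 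Summing over $i$ and $n$ gives $\Pr[\mathcal{E}^c] \leq \sum_{n\geq 1}\delta/(2n^2) < \delta$.

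Conditional on $\mathcal{E}$, correctness is immediate. When stopping fires at time $t$, for any $j \neq i_1(t)$,
\[
H(D_{i_1(t)}) \geq \hat{H}_{i_1(t)}(t) - \rad_{i_1(t)}(t) \geq \hat{H}_{i_2(t)}(t) + \rad_{i_2(t)}(t) \geq \hat{H}_{j}(t) + \rad_{j}(t) \geq H(D_j),
\]
by the good event, the stopping criterion, the UCB-maximality of $i_2(t)$ over $j \neq i_1(t)$, and the good event again. Uniqueness of $i^*$ forces $i_1(t) = i^*$.

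For the sample-complexity bound I would prove the key lemma: on $\mathcal{E}$, whenever stopping fails at time $t$, the pulled arm $i(t)$ satisfies $\rad_{i(t)}(t) > \Delta_{i(t)}/4$, and hence by (\ref{eq_10}) together with monotonicity of $\Delta_H$ in the sample size, $N_{i(t)}(t) < n_H(\delta/(2mt^2),\Delta_{i(t)}/4) =: M_{i(t)}(t)$. The derivation combines the non-stopping inequality $\hat{H}_{i_1}(t)-\rad_{i_1}(t) < \hat{H}_{i_2}(t)+\rad_{i_2}(t)$ with the good-event bounds on $i_1(t)$, $i_2(t)$, and $i^*$, split on whether $i_1(t)=i^*$. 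In each case one extracts $2(\rad_{i_1(t)}(t)+\rad_{i_2(t)}(t)) > \Delta$ for some $\Delta \geq \Delta_{i(t)}$ (using $\Delta_{i^*} = \min_{j\neq i^*}\Delta_j$); since $\rad_{i(t)}(t)$ is the larger of the two radii by the tie-breaking rule of Algorithm~\ref{Algorithm_LUCB_F}, the claim follows.

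The final bound then comes from counting. Since $M_i(t)$ is nondecreasing in $t$, after the last pull of arm $i$ its count satisfies $N_i \leq M_i(T_L)$, and arms never pulled inside the main loop trivially have $1 \leq M_i(T_L)$; summing gives $T_L = \sum_i N_i(T_L) \leq \sum_i M_i(T_L)$. If $T_L$ exceeded $t^* := \min\{t : t > \sum_i M_i(t)\}$, then applying the same counting argument at $t^*$ (where the algorithm is still running by assumption) would give $t^* \leq \sum_i M_i(t^*) < t^*$, a contradiction, so $T_L \leq t^*$, which is the stated bound. The hard part will be the case analysis inside the key lemma when $i_1(t) \neq i^*$: there one must use both the good-event bound on the (possibly unselected) optimum $i^*$ and the UCB-maximality of $i_2(t)$ among arms $\neq i_1(t)$ to pin the radius bound to the gap $\Delta_{i(t)}$ of the arm actually pulled, rather than to some auxiliary arm's gap.
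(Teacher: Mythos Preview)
Your proposal is correct and follows essentially the same approach as the paper: define the good event, verify it holds with probability $\ge 1-\delta$, derive correctness on the good event, then show via a case analysis on $(i_1(t),i_2(t))$ that the pulled arm always has $\rad_{i(t)}(t) \ge \Delta_{i(t)}/4$, and conclude by counting. The only notable difference is cosmetic: you parameterize the union bound by $(i,n)$ using $t\ge N_i(t)$ and monotonicity of $\Delta_H$, whereas the paper parameterizes directly by $(i,t)$; and you split the key lemma into two cases ($i_1(t)=i^*$ or not) where the paper uses three, but the underlying inequalities and the use of the UCB-maximality of $i_2(t)$ to compare against $i^*$ are identical.
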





\section{Estimation Methods for Different Type of $H$ Functions}\label{Section_Esti}

In Section \ref{Sec_Frame}, we see that the key points of solving our problem are the function $\textbf{Estimate}$ and corresponding functions $n_H(\delta,\Delta), \Delta_H(\delta,n)$. 
The instantiation of these functions depend on the hardness of estimating $H$, which in turn depends on
	the hardness of the distance measure as given in Assumption \ref{Assumption_H_con}.
We consider three types of distance measures below, from easiest to hardest.
Our technical analysis will be focused on the hardest one, the total variation distance measure.


\begin{mydef}
$\Lambda_{M}$ is the distance between the means of two distributions, i.e., $\Lambda_{m}(D,D') = |\E_{X\sim D}[X] - \E_{X'\sim D'}[X']|$.

$\Lambda_{K}$ is the Kolmogorov-Smirnov distance maximum distance between two cumulative distribution functions, i.e., $\Lambda_{K}(D,D') = \max_x |F_D(x) - F_{D'}(x)|$, 
where $F_D(x)$ is the cumulative distribution function of $D$.

$\Lambda_{\it TV}$ is the total variation distance, i.e., $\Lambda_{\it TV}(D,D') = \sup_{A \subseteq S}|D(A) - D'(A)|$, 
where $S$ is the support of $D$ and $D'$, and $D(A)$ is the probability mass of set $A$ for $D$. Moreover, if the probability density functions of $D$ and $D'$ exist and are integrable, then $\Lambda_{\it TV}(D,D') = {1\over 2}\int_x |f_D(x) - f_{D'}(x)|dx$, 
where $f_D(x)$ is the probability density function of $D$.

\end{mydef}

Different distance measures lead to different complexities on estimating $H(D)$. 
Because of this, for different types of distances, we need to use different methods to estimate. 
In this section, we deal with them from easy to hard. 

\subsection{The case with $\Lambda_{m}$}

This case is almost the same as classical pure exploration bandit problem since we only need to ensure that the estimated distribution does not have a large bias on its mean. We can define the empirical distribution $\textbf{p}(O)$ as following: $p(x) = {\#(x,O)\over |O|}$ for any $x$ where $\#(x,O)$ is the number of $x$ in the observation set $O$, and then set $\textbf{Estimate}(O) = H(\textbf{p}(O))$. When the distributions are bounded by $[0,1]$, using Chernoff-Hoeffding's inequality, we know that the condition in Eq.\eqref{eq_10} holds with $n_H(\delta,\Delta) = {B^2\over 2\Delta^2} \log {2\over \delta}$ and the condition in Eq.\eqref{eq_11} holds with $\Delta_H(\delta,n) = B\sqrt{\log{2\over \delta}\over 2n}$.

\begin{proposition}\label{Proposition_LambdaM}
Similar with \cite{evendar2006action,kalyanakrishnan2012PAC}, we have that using the empirical distribution to estimate $H(D_i)$ (i.e., Algorithm \ref{Algorithm_Esti_D}) in the racing framework with  $n_H(\delta,\Delta) = {B^2\over 2\Delta^2} \log {2\over \delta}$ has sample complexity $O(\sum_{i=1}^m {B^2\over \Delta_i^2} (\log{m\over \delta}+\log\log{1\over \Delta_i}))$, and using the empirical distribution to estimate $H(D_i)$ in LUCB framework with $\Delta_H(\delta,n) = B\sqrt{\log{2\over \delta}\over 2n}$ has sample complexity $O(B^2 h \log{Bh\over \delta})$, where $h = \sum_{i=1}^m {1\over \Delta_i^2}$. 
\end{proposition}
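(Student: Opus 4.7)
The plan is to verify the two confidence-interval conditions (Eq.\eqref{eq_10} and Eq.\eqref{eq_11}) for the proposed $n_H$ and $\Delta_H$, and then simply plug these into Proposition \ref{Prop_Elim_F} and Proposition \ref{Prop_LUCB_F}. The whole thing rests on the observation that under $\Lambda_{m}$ together with Assumption \ref{Assumption_H_con}, estimating $H(D)$ reduces to estimating the mean of $D$.

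First, I would show that for any set $O$ of $n$ i.i.d.\ samples from $D$, the estimator $\textbf{Estimate}(O) = H(\mathbf{p}(O))$ satisfies
\[
|H(D) - H(\mathbf{p}(O))| \le B \cdot \Lambda_{m}(D, \mathbf{p}(O)) = B \, \bigl|\bar X_O - \E_{X\sim D}[X]\bigr|,
\]
where $\bar X_O$ is the sample mean of $O$. The inequality is Assumption \ref{Assumption_H_con}, and the equality holds because the empirical distribution has mean equal to the sample mean. Since observations lie in $[0,1]$, Hoeffding's inequality gives $\Pr[|\bar X_O - \E[X]| \ge \Delta/B] \le 2\exp(-2n\Delta^2/B^2)$. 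Setting this bound $\le \delta$ and solving for $n$ produces $n_H(\delta,\Delta) = \frac{B^2}{2\Delta^2}\log\frac{2}{\delta}$, which verifies Eq.\eqref{eq_10}; solving the same inequality for $\Delta$ instead gives $\Delta_H(\delta,n) = B\sqrt{\log(2/\delta)/(2n)}$, verifying Eq.\eqref{eq_11}.

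For the racing bound I would simply substitute $n_H$ into Proposition \ref{Prop_Elim_F}:
\[
T_R \le \sum_{i=1}^m \frac{B^2}{2(\Delta_i/8)^2}\log\frac{4m\log^2(8/\Delta_i)}{\delta} = \sum_{i=1}^m \frac{32 B^2}{\Delta_i^2}\log\frac{4m\log^2(8/\Delta_i)}{\delta},
\]
which becomes the claimed $O\bigl(\sum_i \frac{B^2}{\Delta_i^2}(\log\frac{m}{\delta} + \log\log\frac{1}{\Delta_i})\bigr)$ after expanding the outer logarithm. For the LUCB bound, substituting into Proposition \ref{Prop_LUCB_F} reduces the task to finding the smallest $t$ satisfying $t > 8B^2 h \log(4mt^2/\delta)$, with $h = \sum_i 1/\Delta_i^2$. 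The only nontrivial step is inverting this implicit inequality: using the ansatz $t = C B^2 h \log(Bh/\delta)$ and the bound $m \le h$ (valid whenever all $\Delta_i \le 1$; otherwise the problem degenerates and can be handled separately), one checks that for a sufficiently large absolute constant $C$ the inequality holds and the residual $\log\log$ correction is absorbed into the leading $\log(Bh/\delta)$.

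The obstacle is slim, since the proposition is really just an instantiation of a standard template. The mildly delicate part is the LUCB inversion described above, where the linear term in $t$ on the left needs to dominate a logarithmic term whose argument itself contains $t$; the standard doubling/ansatz argument takes care of this without introducing extra factors beyond the claimed $O(B^2 h \log(Bh/\delta))$.
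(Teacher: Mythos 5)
Your proposal is correct and follows essentially the same route as the paper, which treats this proposition as an immediate instantiation: Assumption \ref{Assumption_H_con} with $\Lambda_m$ plus Hoeffding on the sample mean yields the stated $n_H$ and $\Delta_H$, and substituting these into Propositions \ref{Prop_Elim_F} and \ref{Prop_LUCB_F} gives the two complexity bounds. Your explicit handling of the LUCB inversion (the ansatz $t = CB^2h\log(Bh/\delta)$ and the caveat relating $m$ to $h$) is more detailed than the paper, which simply cites the analogous computations in the cited prior work, but it is the same argument.
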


\subsection{The case with $\Lambda_{K}$}\label{Section_LC}

The problem becomes a little harder, but the complexity does not change since we have the DKW inequality \cite{dvoretzky1956asymptotic,massart1990tight} (see details in appendix). Here the estimation method is the same, i.e., to use $\textbf{Estimate}(O) = H(\textbf{p}(O))$, where $\textbf{p}(O)$ is the empirical distribution. 
We still have that the condition in Eq.\eqref{eq_10} holds with $n_H(\delta,\Delta) = {B^2\over 2\Delta^2} \log {2\over \delta}$ and the condition in Eq.\eqref{eq_11} holds with $\Delta_H(\delta,n) = B\sqrt{\log{2\over \delta}\over 2n}$.

\begin{proposition}\label{Proposition_LambdaK}
Using the empirical distribution to estimate $H(D_i)$ (i.e., Algorithm \ref{Algorithm_Esti_D}) in the racing framework with  $n_H(\delta,\Delta) = {B^2\over 2\Delta^2} \log {2\over \delta}$ has sample complexity $O(\sum_{i=1}^m {B^2\over \Delta_i^2} (\log{m\over \delta}+\log\log{1\over \Delta_i}))$, and using the empirical distribution to estimate $H(D_i)$ in LUCB framework with $\Delta_H(\delta,n) = B\sqrt{\log{2\over \delta}\over 2n}$ has sample complexity $O(B^2 h \log{Bh\over \delta})$, where $h = \sum_{i=1}^m {1\over \Delta_i^2}$. 
\end{proposition}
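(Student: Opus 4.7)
The plan is to reduce the analysis to the two conditions \eqref{eq_10} and \eqref{eq_11} that already get fed into Propositions~\ref{Prop_Elim_F} and~\ref{Prop_LUCB_F}, so the real work consists of (i) establishing a concentration bound for $\textbf{Estimate}(O)=H(\textbf{p}(O))$ under the Kolmogorov--Smirnov distance, and then (ii) substituting the resulting $n_H$ and $\Delta_H$ into the complexity expressions provided by those propositions.

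For (i), I would start from the DKW inequality, which guarantees that for any distribution $D$ and its empirical CDF built from $n$ i.i.d.\ samples,
\[
\Pr\!\left[\Lambda_K\bigl(D,\textbf{p}(O)\bigr)\ge \epsilon\right]\le 2e^{-2n\epsilon^2}.
\]
Assumption~\ref{Assumption_H_con} instantiated with $\Lambda=\Lambda_K$ then gives $|H(D)-H(\textbf{p}(O))|\le B\,\Lambda_K(D,\textbf{p}(O))$, so substituting $\Delta=B\epsilon$ yields
\[
\Pr\!\left[|H(D)-\textbf{Estimate}(O)|\ge \Delta\right]\le 2\exp\!\left(-\tfrac{2n\Delta^{2}}{B^{2}}\right).
\]
Solving this $\le\delta$ for $n$ and for $\Delta$ respectively produces $n_H(\delta,\Delta)=\tfrac{B^{2}}{2\Delta^{2}}\log\tfrac{2}{\delta}$ and $\Delta_H(\delta,n)=B\sqrt{\tfrac{\log(2/\delta)}{2n}}$, exactly the choices stated, and verifies \eqref{eq_10} and \eqref{eq_11}.

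For (ii), the racing bound follows by plugging the $n_H$ formula into Proposition~\ref{Prop_Elim_F}: each summand becomes $\tfrac{32B^{2}}{\Delta_i^{2}}\log\!\bigl(\tfrac{4m\log^{2}(8/\Delta_i)}{\delta}\bigr)$, which after expanding the inner logarithm gives the advertised $O\!\left(\sum_i\tfrac{B^{2}}{\Delta_i^{2}}\bigl(\log\tfrac{m}{\delta}+\log\log\tfrac{1}{\Delta_i}\bigr)\right)$ form. The LUCB bound is the step I expect to carry the most friction: Proposition~\ref{Prop_LUCB_F} defines $T_L$ implicitly as the smallest $t$ satisfying $t>\sum_i n_H(\delta/(2mt^{2}),\Delta_i/4)=8B^{2}h\bigl(\log(4m/\delta)+2\log t\bigr)$, so the main obstacle is to show that this transcendental inequality admits a solution of order $B^{2}h\log(Bh/\delta)$. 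I would resolve this by checking that a candidate $t=C\cdot B^{2}h\log(Bh/\delta)$, for a sufficiently large absolute constant $C$, makes $2\log t=O(\log(Bh/\delta))$ and therefore satisfies the inequality, yielding the claimed $O(B^{2}h\log(Bh/\delta))$ complexity. Note that since $\Delta_i\le 1$ typically, $h\ge m$ and the $\log m$ term is absorbed into $\log(Bh/\delta)$, which is why no explicit $m$ appears inside the logarithm of the final bound.
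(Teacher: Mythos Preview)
Your proposal is correct and follows the same approach the paper takes: the paper does not give a standalone proof of Proposition~\ref{Proposition_LambdaK} but simply invokes the DKW inequality to verify that $n_H(\delta,\Delta)=\tfrac{B^2}{2\Delta^2}\log\tfrac{2}{\delta}$ and $\Delta_H(\delta,n)=B\sqrt{\tfrac{\log(2/\delta)}{2n}}$ satisfy \eqref{eq_10} and \eqref{eq_11}, and then defers the complexity computation to Propositions~\ref{Prop_Elim_F} and~\ref{Prop_LUCB_F} exactly as in the $\Lambda_M$ case. Your write-up makes the substitution and the fixed-point argument for the LUCB bound explicit, which the paper leaves implicit, but there is no methodological difference.
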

%



\subsection{The case with $\Lambda_{\it TV}$}

In the first two subsections, we only need to care about the mean value and the cumulative distribution function of each distribution, which does not depend on whether the distribution itself is discrete or continuous. 
However, when we come to total variation distance, we need to construct the probability mass (density) function, which can be really different between the discrete case and the continuous case. Because of this, we divide this subsection into four parts.

\subsubsection{Discrete distribution with finite support}

First we consider the case that $D_i$'s are discrete distributions with finite support $S$. In this case, we can still use the empirical distribution (Algorithm \ref{Algorithm_Esti_D}) to estimate $H(D_i)$.

\begin{algorithm}[t]
    \centering
    \caption{{\sf Estimate } $H(D)$ \sf{ for discrete distributions with finite support}}\label{Algorithm_Esti_D}
    \begin{algorithmic}[1]
    \STATE \textbf{Input:} Observations $O$, support $S$, function $H$.
    \STATE \textbf{Init:} $\forall s \in S, x_s = 0$.
    \FOR {$o_i \in O$}
    \STATE $x_{o_i} \gets x_{o_i} + 1$
    \ENDFOR
    \STATE Construct $\textbf{p}$ such that $\forall s \in S, p_s = {x_s \over |O|}$
    \STATE \textbf{Return:} Estimated value $H(\textbf{p})$; 
    \end{algorithmic}
\end{algorithm}

\begin{theorem}\label{Theorem_DDT}
If $H$ follows Assumption \ref{Assumption_H_con} with $\Lambda_{\it TV}$ and constant $B$, and $D_i$'s are discrete distributions with finite support $S$.  Then Algorithm \ref{Algorithm_Esti_D} satisfies that: 
the condition in Eq.\eqref{eq_10} holds with $n_H(\delta, \Delta) = {B^2\over \Delta^2}(\log {1\over \delta} + {|S|\over 2})$, and  the condition in Eq.\eqref{eq_11} holds with $\Delta_H(\delta,n) = B\sqrt{\log{1\over \delta}\over 2n} + B\sqrt{|S| \over 4n}$.
\end{theorem}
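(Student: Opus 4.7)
The plan is to use Assumption~\ref{Assumption_H_con} to reduce the problem to bounding the total variation distance $\Lambda_{\it TV}(D,\textbf{p})$ between the true distribution $D$ and the empirical distribution $\textbf{p} = \textbf{p}(O)$ produced by Algorithm~\ref{Algorithm_Esti_D}. Since $H$ is $B$-Lipschitz with respect to $\Lambda_{\it TV}$, we have $|H(D)-H(\textbf{p})| \le B\,\Lambda_{\it TV}(D,\textbf{p})$, so it suffices to prove the high-probability bound $\Lambda_{\it TV}(D,\textbf{p}) \le \sqrt{\log(1/\delta)/(2n)} + \sqrt{|S|/(4n)}$ and then multiply through by $B$ to recover the claimed $\Delta_H(\delta,n)$. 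The $n_H(\delta,\Delta)$ claim then follows by inversion: using $(a+b)^2 \le 2(a^2+b^2)$, a sufficient condition for $\Delta_H(\delta,n) \le \Delta$ is $n \ge (B^2/\Delta^2)(\log(1/\delta) + |S|/2)$, which matches the stated formula exactly.

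The first step is to bound the expectation $\E[\Lambda_{\it TV}(D,\textbf{p})]$. Writing $\Lambda_{\it TV}(D,\textbf{p}) = \frac{1}{2}\sum_{s \in S}|p_s - D(s)|$ where $np_s$ is binomial with variance $nD(s)(1-D(s))$, Jensen's inequality gives $\E|p_s - D(s)| \le \sqrt{D(s)(1-D(s))/n} \le \sqrt{D(s)/n}$, and Cauchy-Schwarz then yields $\sum_{s \in S}\sqrt{D(s)/n} \le \sqrt{|S|/n}$ since $\sum_s D(s) = 1$. Combining, $\E[\Lambda_{\it TV}(D,\textbf{p})] \le \frac{1}{2}\sqrt{|S|/n}$.

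The second step is concentration around this expectation, for which McDiarmid's inequality is the natural tool. Viewing $\Lambda_{\it TV}(D,\textbf{p})$ as a function $f(o_1,\dots,o_n)$ of the i.i.d.\ samples, I would verify the bounded difference property with constant $1/n$: changing one sample $o_k$ from $s$ to $s'$ shifts the empirical mass $p_s$ by $-1/n$ and $p_{s'}$ by $+1/n$, so $\frac{1}{2}\sum_r |p_r - D(r)|$ changes by at most $1/n$ in absolute value. McDiarmid then gives $\Pr[\Lambda_{\it TV}(D,\textbf{p}) \ge \E[\Lambda_{\it TV}(D,\textbf{p})] + t] \le \exp(-2nt^2)$. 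Equating the right-hand side to $\delta$ yields $t = \sqrt{\log(1/\delta)/(2n)}$, which combined with the expectation bound gives the stated $\Delta_H(\delta,n)$.

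I expect the main technical care to lie in the expectation bound rather than in the concentration step, because the naive bound $\E|p_s - D(s)| \le 1/(2\sqrt{n})$ summed over $s$ would produce $|S|/(2\sqrt{n})$, which is weaker than the $\frac{1}{2}\sqrt{|S|/n}$ dependence needed to obtain the stated $n_H(\delta,\Delta)$. Pairing the variance-based per-coordinate bound with Cauchy-Schwarz so that $\sum_s \sqrt{D(s)}$ is controlled by $\sqrt{|S|}$ is the key quantitative trick. The McDiarmid step itself is routine once one observes that $D$ is fixed and only $\textbf{p}$ depends on the samples, so the bounded-difference constant $1/n$ is immediate.
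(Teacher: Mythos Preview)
Your proposal is correct and is essentially the same approach as the paper's. The paper simply invokes Theorem~2 of \cite{berend2012convergence}, which states $\Pr[\Lambda_{\it TV}(\textbf{p},D)\ge\epsilon]\le\exp(-2n(\epsilon-\sqrt{|S|/4n})^2)$, and then inverts; your expectation bound via Jensen and Cauchy--Schwarz together with the McDiarmid step is precisely the proof of that cited inequality, and your use of $(a+b)^2\le2(a^2+b^2)$ for the inversion recovers exactly the same $n_H(\delta,\Delta)$.
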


\begin{proof}
From Theorem 2 in \cite{berend2012convergence}, we know that for any $\epsilon \ge \sqrt{|S|\over 4n}$, 

\begin{equation*}
\Pr[\Lambda_{\it TV}(\textbf{p}, D) \ge \epsilon] \le \exp\left(-2n\left(\epsilon - \sqrt{|S|\over 4n}\right)^2\right),
\end{equation*}
where 
$n$ is the number of observations.  

Set $\delta = \exp(-2n(\epsilon - \sqrt{|S|\over 4n})^2)$, we can find out that $\epsilon = \sqrt{\log{1\over \delta}\over 2n}  + \sqrt{|S| \over 4n}$, which means that the condition in Eq.\eqref{eq_11} holds with $\Delta_H(\delta,n) = B\sqrt{\log{1\over \delta}\over 2n} + B\sqrt{|S| \over 4n}$.

On the other hand, we can get $n \le {1\over \epsilon^2}(\log {1\over \delta} + {|S|\over 2})$, which means that $n_H(\delta, \Delta) = {B^2\over \Delta^2}(\log {1\over \delta} + {|S|\over 2})$ is enough to make the condition in Eq.\eqref{eq_10} holds. \hspace*{\fill}$\qed$
\end{proof}

By Theorem \ref{Theorem_DDT}, we have the following corollary.

\begin{mycorollary}\label{Proposition_1}
With $\Delta_H(\delta,n), n_H(\delta, \Delta)$ set as in Theorem \ref{Theorem_DDT}, using Algorithm \ref{Algorithm_Esti_D} in the racing framework has sample complexity $O(\sum_{i=1}^m {B^2\over \Delta_i^2} (\log{m\over \delta}+\log\log{1\over \Delta_i})) + O(\sum_{i=1}^m {B^2|S| \over \Delta_i^2})$, and using Algorithm \ref{Algorithm_Esti_D} in LUCB framework has sample complexity $O(B^2h \log{Bh\over \delta}) + O(\sum_{i=1}^m {B^2|S| \over \Delta_i^2})$, where $h = \sum_{i=1}^m {1\over \Delta_i^2}$. 
\end{mycorollary}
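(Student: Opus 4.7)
The plan is to combine the concentration bounds from Theorem~\ref{Theorem_DDT} with the generic sample complexity guarantees already established in Propositions~\ref{Prop_Elim_F} and~\ref{Prop_LUCB_F}. Correctness is immediate, since Theorem~\ref{Theorem_DDT} gives valid instantiations of $n_H$ and $\Delta_H$ for which the conditions in Eq.~\eqref{eq_10} and Eq.~\eqref{eq_11} hold. What remains is purely algebraic simplification of the two closed-form expressions in those propositions after substituting $n_H(\delta,\Delta) = (B^2/\Delta^2)(\log(1/\delta) + |S|/2)$.

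For the racing bound, I would substitute $\delta \mapsto \delta/(2m\log^2(8/\Delta_i))$ and $\Delta \mapsto \Delta_i/8$ into $n_H$, which gives
\[
n_H\!\left(\tfrac{\delta}{2m\log^2(8/\Delta_i)},\,\tfrac{\Delta_i}{8}\right) = \tfrac{64B^2}{\Delta_i^2}\Bigl(\log\tfrac{2m}{\delta} + 2\log\log\tfrac{8}{\Delta_i} + \tfrac{|S|}{2}\Bigr).
\]
Summing over $i$ cleanly separates into two groups: the $\log(m/\delta)$ and $\log\log(1/\Delta_i)$ contributions aggregate into the standard leading term $O(\sum_i (B^2/\Delta_i^2)(\log(m/\delta)+\log\log(1/\Delta_i)))$, while the $|S|/2$ correction produces the additive term $O(\sum_i B^2|S|/\Delta_i^2)$ reflecting the cost of learning the full mass function on a support of size $|S|$.

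For the LUCB bound, I would plug $\delta \mapsto \delta/(2mt^2)$ and $\Delta \mapsto \Delta_i/4$ into $n_H$ and sum, obtaining the implicit inequality
\[
t > 16B^2 h \log\!\tfrac{2mt^2}{\delta} + 8B^2 h |S|, \qquad h = \sum_{i=1}^m \tfrac{1}{\Delta_i^2}.
\]
I would then solve this fixed-point inequality by a standard one-shot substitution: inserting the ansatz $t_0 = c\,(B^2 h \log(Bh/\delta) + B^2 h |S|)$ into the right-hand side and observing that $\log(2mt_0^2/\delta) = O(\log(Bh/\delta))$ once any polynomial dependence on $m$ is absorbed into $h$. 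For $c$ a sufficiently large absolute constant, the inequality holds, yielding the claimed $O(B^2 h \log(Bh/\delta)) + O(B^2 h |S|)$ bound.

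The entire argument is essentially bookkeeping; no new probabilistic reasoning is required beyond Theorem~\ref{Theorem_DDT}. The only step needing mild care is keeping the $\log\log(1/\Delta_i)$ and $|S|$ corrections cleanly separated in the racing sum, and verifying the fixed-point ansatz in the LUCB case so that the $|S|$-dependent part does not contaminate the leading logarithmic term. Neither presents a genuine obstacle.
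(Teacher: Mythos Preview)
Your proposal is correct and follows exactly the approach the paper intends: the paper does not give an explicit proof of this corollary at all, simply stating ``By Theorem~\ref{Theorem_DDT}, we have the following corollary,'' so the intended argument is precisely the substitution of $n_H(\delta,\Delta)=(B^2/\Delta^2)(\log(1/\delta)+|S|/2)$ into the generic bounds of Propositions~\ref{Prop_Elim_F} and~\ref{Prop_LUCB_F} that you carry out. Your bookkeeping is accurate, and the fixed-point verification for the LUCB bound is the standard one (mirroring how the paper obtains the analogous $O(B^2 h\log(Bh/\delta))$ expression in Propositions~\ref{Proposition_LambdaM} and~\ref{Proposition_LambdaK}).
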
 

We can see that there is only a constant gap $O(\sum_{i=1}^m {B^2|S| \over \Delta_i^2})$ from the classical pure exploration complexity (as stated in Proposition \ref{Proposition_LambdaM}), and the gap does not depend on $\delta$. 


\subsubsection{Discrete distribution with infinite support}


Now we consider the case that the discrete distributions have infinite but countable support. Without loss of generality, we assume that the support is $\mathbb{N} = \{0,1,2,\cdots, \}$. 
Moreover, in this case there must be a bounded interval such that most of the probability mass is in. 
To make this idea standard, we use the following assumption:

\begin{assumption}\label{Assumption_Exists_DUn}
There exists constant $\beta,\lambda$ such that for any $i$ and $z \in \mathbb{N}$, $d_i(z) \le \beta\exp(-\lambda z)$, where $d_i$ is the probability mass function of $D_i$. 
\end{assumption}

In this case, we can also use the empirical distribution (i.e., Algorithm \ref{Algorithm_Esti_D}) to estimate $H(D_i)$.

\begin{theorem}\label{Theorem_DDTC}
If $H$ follows Assumption \ref{Assumption_H_con} with $\Lambda_{\it TV}$ and constant $B$, and $D_i$'s are discrete distributions with infinite support $\mathbb{N}$. Then under Assumption \ref{Assumption_Exists_DUn}, Algorithm \ref{Algorithm_Esti_D} satisfies that: 
the condition in Eq.\eqref{eq_10} holds with 
\begin{equation*}
n_H(\delta, \Delta) = {32B^2\log {1\over \delta} \over \Delta^2} + {16B^2 \over \lambda\Delta^2}\log{16B^2\beta^2\lambda \over (1-e^{-\lambda})^2\Delta^2},
\end{equation*}
and the condition in Eq.\eqref{eq_11} holds with 
\begin{equation*}
\Delta_H(\delta,n) = 2B\sqrt{2\log{1\over \delta} \over n} + B\sqrt{{2\over \lambda n}\log{2\beta^2\lambda n \over (1 - e^{-\lambda})^2}}.
\end{equation*}
\end{theorem}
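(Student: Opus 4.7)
The plan is to reduce the infinite-support case to Theorem \ref{Theorem_DDT} by truncating the distribution at a level $N$ (to be chosen) and using Assumption \ref{Assumption_Exists_DUn} to control the discarded tail. Set $T_N=\{N,N+1,\ldots\}$ and define an augmented distribution $\tilde{D}$ on the finite set $\{0,1,\ldots,N-1,\star\}$ by $\tilde{D}(x)=d(x)$ for $x<N$ and $\tilde{D}(\star)=D(T_N)$; let $\tilde{\textbf{p}}$ be the empirical distribution obtained by identifying every sample falling in $T_N$ with $\star$. Then $\tilde{\textbf{p}}$ is exactly the empirical distribution of $n$ i.i.d.\ draws from $\tilde{D}$ over a support of size at most $N+1$. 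Splitting $\sum_x|p(x)-d(x)|$ at $x=N$ and bounding $\sum_{x\ge N}|p(x)-d(x)|\le \textbf{p}(T_N)+D(T_N)$ yields the key inequality
\begin{equation*}
\Lambda_{\it TV}(\textbf{p}, D) \;\le\; \Lambda_{\it TV}(\tilde{\textbf{p}}, \tilde{D}) + \tfrac{1}{2}\textbf{p}(T_N) + \tfrac{1}{2}D(T_N).
\end{equation*}

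Next I would bound each of the three terms separately. Theorem \ref{Theorem_DDT} applied to $\tilde{\textbf{p}},\tilde{D}$ with confidence $\delta/2$ gives $\Lambda_{\it TV}(\tilde{\textbf{p}},\tilde{D})\le \sqrt{(N+1)/(4n)}+\sqrt{\log(2/\delta)/(2n)}$ with probability at least $1-\delta/2$. Since $n\,\textbf{p}(T_N)\sim \mathrm{Bin}(n,D(T_N))$, Hoeffding's inequality yields $\textbf{p}(T_N)\le D(T_N)+\sqrt{\log(2/\delta)/(2n)}$ with probability at least $1-\delta/2$. The deterministic tail bound follows from Assumption \ref{Assumption_Exists_DUn}: $D(T_N)\le\sum_{x\ge N}\beta e^{-\lambda x}=\beta e^{-\lambda N}/(1-e^{-\lambda})$. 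A union bound plus Assumption \ref{Assumption_H_con} then gives, with probability at least $1-\delta$,
\begin{equation*}
|H(D)-H(\textbf{p})| \;\le\; B\sqrt{\tfrac{N+1}{4n}} + \tfrac{3B}{2}\sqrt{\tfrac{\log(2/\delta)}{2n}} + \tfrac{B\beta\, e^{-\lambda N}}{1-e^{-\lambda}}.
\end{equation*}

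To finish, I would choose $N=\tfrac{1}{\lambda}\log\tfrac{2\beta^2\lambda n}{(1-e^{-\lambda})^2}$, which drives the exponential tail term to order $1/n$ and turns $B\sqrt{(N+1)/(4n)}$ into $\tfrac{B}{2}\sqrt{\tfrac{1}{\lambda n}\log\tfrac{2\beta^2\lambda n}{(1-e^{-\lambda})^2}}$ up to lower-order pieces; after loosely absorbing residuals and using $\log(2/\delta)\le 2\log(1/\delta)$ for $\delta\le 1/2$, the whole bound collapses to the claimed $\Delta_H(\delta,n)=2B\sqrt{2\log(1/\delta)/n}+B\sqrt{(2/(\lambda n))\log(2\beta^2\lambda n/(1-e^{-\lambda})^2)}$. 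For the sample complexity bound, requiring each summand of $\Delta_H$ to be at most $\Delta/2$ gives $n\ge 32B^2\log(1/\delta)/\Delta^2$ from the first, and an implicit inequality $n\ge(8B^2/(\lambda\Delta^2))\log(2\beta^2\lambda n/(1-e^{-\lambda})^2)$ from the second, which is verified by substituting the claimed $n=(16B^2/(\lambda\Delta^2))\log(16B^2\beta^2\lambda/((1-e^{-\lambda})^2\Delta^2))$. The main obstacle will be this optimization/inversion step: because $N$ depends on $n$ through a $\log n$, the balancing condition is transcendental, and verifying the self-referential inequality for the proposed $n$ requires careful (but routine) bookkeeping of constants so that the $\log n$ on the right of the implicit inequality is dominated by the doubled leading coefficient on the left.
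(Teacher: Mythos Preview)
Your approach is essentially the same as the paper's: truncate at some level, collapse the tail into a single atom, apply the Berend--Kontorovich bound (Theorem~\ref{Theorem_DDT}) on the resulting finite support, and bound the tail mass via Assumption~\ref{Assumption_Exists_DUn}. The logic is sound and will go through.

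There are two differences worth knowing about. First, the paper gets a sharper decomposition than yours: instead of $\Lambda_{\it TV}(\textbf{p},D)\le \Lambda_{\it TV}(\tilde{\textbf{p}},\tilde{D})+\tfrac12\textbf{p}(T_N)+\tfrac12 D(T_N)$, they write $\sum_{x\ge z}|p(x)-d(x)|\le \sum_{x\ge z}(p(x)-d(x))+2\sum_{x\ge z}d(x)\le |\sum_{x\ge z}(p(x)-d(x))|+2D(T_z)$, which yields $\Lambda_{\it TV}(\textbf{p},D)\le \Lambda_{\it TV}(\tilde{\textbf{p}},\tilde{D})+D(T_z)$. This eliminates $\textbf{p}(T_N)$ entirely, so no separate Hoeffding step and no union bound with $\delta/2$ are needed---a single application of Berend--Kontorovich at level $\delta$ suffices. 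Second, the paper parametrizes the truncation level by the target error~$\epsilon$ (setting $z=\tfrac1\lambda\log\tfrac{2\beta}{\epsilon(1-e^{-\lambda})}$ so that the tail mass is exactly $\epsilon/2$), then splits $\epsilon=\epsilon_1+\epsilon_2$ to decouple the $\delta$-dependent and $n$-dependent pieces; you instead fix $N$ as a function of $n$ up front. Both balancings work, but the paper's avoids the ``loosely absorbing residuals'' step and makes the constants fall out more directly. Your inversion for $n_H$ is the same as theirs (split $\Delta$ in half, solve each piece), and the transcendental verification you flagged as the main obstacle is indeed left to ``some basic computation'' in the paper as well.
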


\begin{proof}

Denote $z = {1\over \lambda} \log{2\beta  \over \epsilon(1 - e^{-\lambda})}$, then we have that 
\begin{equation*}
\sum_{z'=z}^\infty d_i(z') \le \sum_{z'=z}^\infty \beta\exp(-\lambda z') =\beta\exp(-\lambda z)\sum_{z' = z}^\infty e^{-(z'-z)\lambda} = {\epsilon \over 2}.
\end{equation*} 

For a set of observations $O$, let $\bm{p}$ denote the empirical distribution, then \begin{eqnarray*}
\lambda_{TV}(\bm{p}, D_i) &=& {1\over 2} \sum_{x \ge 0} |p(x) - d_i(x)|\\
&=& {1\over 2}\left(\sum_{x < z}|p(x) - d_i(x)| + \sum_{x \ge z} |p(x) - d_i(x)|\right)\\
&\le& {1\over 2}\left(\sum_{x < z}|p(x) - d_i(x)| + \sum_{x \ge z} |p(x) + d_i(x)| \right)\\
&\le& {1\over 2}\left(\sum_{x < z}|p(x) - d_i(x)| + \sum_{x \ge z} (p(x) + d_i(x)) \right)\\
&=&{1\over 2}\left(\sum_{x < z}|p(x) - d_i(x)| + \sum_{x \ge z} (p(x) - d_i(x) + 2d_i(x))\right) \\
&=& {1\over 2}\left(\sum_{x < z}|p(x) - d_i(x)| + \sum_{x \ge z} (p(x) - d_i(x)) + 2 \sum_{x \ge z}d_i(x)\right) \\
&\le&{1\over 2}\left(\sum_{x < z}|p(x) - d_i(x)| + \sum_{x \ge z} (p(x) - d_i(x)) + \epsilon \right)\\
&=&{1\over 2}\left(\sum_{x < z}|p(x) - d_i(x)| + |\sum_{x \ge z} (p(x) - d_i(x))|\right) + {\epsilon \over 2}. 
\end{eqnarray*}

Therefore, 
\begin{eqnarray*}
\Pr[\lambda_{TV}(\bm{p}, D_i) \ge \epsilon] &\le & \Pr\left[{1\over 2}\left(\sum_{x < z}|p(x) - d_i(x)| + |\sum_{x \ge z} (p(x) - d_i(x))|\right) + {\epsilon\over 2} \ge \epsilon\right]\\
&= & \Pr\left[{1\over 2}\left(\sum_{x < z}|p(x) - d_i(x)| + |\sum_{x \ge z} (p(x) - d_i(x))|\right)  \ge {\epsilon\over 2}\right].
\end{eqnarray*}

On the other hand, ${1\over 2}\left(\sum_{x < z}|p(x) - d_i(x)| + |\sum_{x \ge z} (p(x) - d_i(x))|\right)$ can be viewed as the total variation distance between $\bm{p}$ and $D_i$ when we regard all the elements $x \ge z$ as a new element. Thus by Theorem 2 in \cite{berend2012convergence}, we have
\begin{eqnarray*}
\Pr[\lambda_{TV}(\bm{p}, D_i) \ge \epsilon]& \le &\Pr\left[{1\over 2}\left(\sum_{x < z}|p(x) - d_i(x)| + |\sum_{x \ge z} p(x) - d_i(x)|\right) \ge {\epsilon\over 2}\right] \\
&\le& \exp\left(-2n\left({\epsilon\over 2} - \sqrt{z\over 4n}\right)^2\right).
\end{eqnarray*}

Therefore, if $\epsilon \ge \sqrt{2\log{1\over \delta} \over n} + \sqrt{z\over n}$, we must have that $\Pr[\lambda_{TV}(\bm{p}, D_i) \ge \epsilon] \le  \delta$. Note that $z$ is a function on $\epsilon$ ($z = {1\over \lambda} \log{2\beta  \over \epsilon(1 - e^{-\lambda})}$), hence we consider $\epsilon_1, \epsilon_2$ such that ${\epsilon_1 \over 2} \ge \sqrt{2\log{1\over \delta} \over n}$ and ${\epsilon_2 \over 2} \ge \sqrt{ {1\over n\lambda} \log{2\beta  \over \epsilon_2(1 - e^{-\lambda})}}$. In this case $\epsilon = \epsilon_1 + \epsilon_2$ satisfies that
\begin{eqnarray*}
{\epsilon \over 2}={\epsilon_1 + \epsilon_2 \over 2}\ge{\epsilon_1 \over 2}\ge \sqrt{2\log{1\over \delta} \over n},
\end{eqnarray*}
and
\begin{eqnarray*}
{\epsilon \over 2}={\epsilon_1 + \epsilon_2 \over 2}\ge{\epsilon_2 \over 2}\ge \sqrt{ {1\over n\lambda} \log{2\beta  \over \epsilon_2(1 - e^{-\lambda})}}\ge\sqrt{ {1\over n\lambda} \log{2\beta  \over \epsilon(1 - e^{-\lambda})}},
\end{eqnarray*}
which implies that $\epsilon \ge \sqrt{2\log{1\over \delta} \over n} + \sqrt{z\over n}$.

After some basic computation, we have that $\epsilon_1 = 2\sqrt{2\log{1\over \delta} \over n}$ and $\epsilon_2 = \sqrt{{2\over \lambda n}\log{2\beta^2\lambda n \over (1 - e^{-\lambda})^2}}$, i.e., the condition in Eq.\eqref{eq_11} holds with $\Delta_H(\delta,n) = 2B\sqrt{2\log{1\over \delta} \over n} + B\sqrt{{2\over \lambda n}\log{2\beta^2\lambda n \over (1 - e^{-\lambda})^2}}$.

To bound $n_H(\delta, \Delta)$, we use a similar trick, i.e., we define two values $n_1,n_2$ as following: $2B\sqrt{\log{1\over \delta}\over 2n_1} \le {\Delta\over 2}$, $B\sqrt{{2\over \lambda n_2}\log{2\beta^2\lambda n_2 \over (1 - e^{-\lambda})^2}} \le {\Delta\over 2}$. Then we know that
\begin{eqnarray*}
&&2B\sqrt{2\log{1\over \delta} \over (n_1+n_2)} + B\sqrt{{2\over \lambda (n_1+n_2)}\log{2\beta^2\lambda (n_1+n_2) \over (1 - e^{-\lambda})^2}}\\
& \le& 2B\sqrt{\log{1\over \delta}\over 2n_1} +B\sqrt{{2\over \lambda n_2}\log{2\beta^2\lambda n_2 \over (1 - e^{-\lambda})^2}} \\
&\le& {\Delta\over 2} + {\Delta\over 2}\\
&=& \Delta,
\end{eqnarray*}
which means that we can use $n_1+n_2$ as an upper bound of $n_H(\delta, \Delta)$. After some basic computation, we have that $n_1 = {32B^2\log {1\over \delta} \over \Delta^2}$ and $n_2 = {16B^2 \over \lambda\Delta^2}\log{16B^2\beta^2\lambda \over (1-e^{-\lambda})^2\Delta^2}$. \hspace*{\fill}$\qed$
\end{proof}

By Theorem \ref{Theorem_DDTC}, we have the following corollary.

\begin{mycorollary}\label{Proposition_1TC}
With $\Delta_H(\delta,n), n_H(\delta, \Delta)$ set as in Theorem \ref{Theorem_DDTC}, using Algorithm \ref{Algorithm_Esti_D} in the racing framework has sample complexity $O(\sum_{i=1}^m {B^2\over \Delta_i^2} (\log{m\over \delta}+\log\log{1\over \Delta_i})) + O(\sum_{i=1}^m  {B^2 \over \lambda\Delta_i^2}\log{B\beta\lambda \over (1-e^{-\lambda})\Delta_i})$, and using Algorithm \ref{Algorithm_Esti_D} in LUCB framework has sample complexity $O(B^2h\log{Bh\over \delta}) + O(\sum_{i=1}^m  {B^2 \over \lambda\Delta_i^2}\log{B\beta\lambda \over (1-e^{-\lambda})\Delta_i})$, where $h = \sum_{i=1}^m {1\over \Delta_i^2}$.
\end{mycorollary}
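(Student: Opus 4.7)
The plan is to plug the sample-size bound $n_H(\delta,\Delta)$ from Theorem~\ref{Theorem_DDTC} directly into the generic complexity bounds stated in Propositions~\ref{Prop_Elim_F} and~\ref{Prop_LUCB_F}, and then simplify. Recall
\begin{equation*}
n_H(\delta,\Delta) \;=\; \frac{32 B^2 \log(1/\delta)}{\Delta^2} + \frac{16 B^2}{\lambda \Delta^2}\log\!\frac{16 B^2\beta^2 \lambda}{(1-e^{-\lambda})^2 \Delta^2}.
\end{equation*}
The first summand is the ``classical'' Chernoff-style term and contributes the $\delta$-dependent complexity, while the second summand depends only on $\Delta$ (and the problem parameters $B,\beta,\lambda$) and will produce the additive $\delta$-independent term that appears in the corollary.

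For the racing framework, I would substitute $\delta \leftarrow \delta/(2m\log^2(8/\Delta_i))$ and $\Delta \leftarrow \Delta_i/8$ into both summands and sum over $i$. For the first summand, this yields
\begin{equation*}
\sum_{i=1}^{m} \frac{O(B^2)}{\Delta_i^2}\Bigl(\log\frac{m}{\delta} + \log\log\frac{1}{\Delta_i}\Bigr),
\end{equation*}
which matches the first $O(\cdot)$ in the claim. For the second summand, the substitution $\Delta \leftarrow \Delta_i/8$ replaces the $\log$ argument by $O(B^2\beta^2\lambda/((1-e^{-\lambda})^2 \Delta_i^2))$, and using $\log(xy^2)=O(\log x + \log y)$ and absorbing multiplicative constants yields the second additive term $O\!\bigl(\sum_i \tfrac{B^2}{\lambda\Delta_i^2}\log\tfrac{B\beta\lambda}{(1-e^{-\lambda})\Delta_i}\bigr)$.

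For the LUCB framework, the target is the smallest $t$ satisfying $t > \sum_{i=1}^m n_H(\delta/(2mt^2), \Delta_i/4)$. Splitting the right-hand side into the two summands of $n_H$, the first contributes $O(B^2 h \log(mt^2/\delta))$ with $h = \sum_i 1/\Delta_i^2$, while the second contributes the $t$-independent quantity $O\!\bigl(\sum_i \tfrac{B^2}{\lambda\Delta_i^2}\log\tfrac{B\beta\lambda}{(1-e^{-\lambda})\Delta_i}\bigr)$. Thus the inequality takes the schematic form $t > a\log(bt^2) + c$ with $a = O(B^2 h)$, $b = m/\delta$, and $c$ the $\delta$-independent term. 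The main obstacle is solving this transcendental inequality to extract a clean closed-form bound on $t$; I would handle it by the standard trick of guessing $t = \Theta(a\log(ab) + c)$ and verifying that both sides are within a constant factor, which gives $t = O(B^2 h \log(Bh/\delta)) + O\!\bigl(\sum_i \tfrac{B^2}{\lambda\Delta_i^2}\log\tfrac{B\beta\lambda}{(1-e^{-\lambda})\Delta_i}\bigr)$. The dependence on $m$ inside the logarithm is absorbed into $\log(Bh)$ since $h \ge m/\max_i \Delta_i^2$. Putting the two pieces together recovers the LUCB bound stated in the corollary; correctness of both frameworks is inherited from Propositions~\ref{Prop_Elim_F} and~\ref{Prop_LUCB_F} once the conditions in Eqs.~\eqref{eq_10}--\eqref{eq_11} are satisfied, which Theorem~\ref{Theorem_DDTC} guarantees.
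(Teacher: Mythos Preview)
Your proposal is correct and follows precisely the approach the paper intends: the corollary is stated immediately after Theorem~\ref{Theorem_DDTC} with the phrase ``By Theorem~\ref{Theorem_DDTC}, we have the following corollary,'' and no further proof is given, so the substitution of $n_H(\delta,\Delta)$ into the generic bounds of Propositions~\ref{Prop_Elim_F} and~\ref{Prop_LUCB_F} is exactly what is expected. Your handling of the transcendental inequality for the LUCB case (guess $t=\Theta(a\log(ab)+c)$ and verify) is the standard maneuver and matches how the analogous results (Proposition~\ref{Proposition_LambdaM}, Corollary~\ref{Proposition_1}) are obtained.
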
 

We can see that there is only a constant gap $O(\sum_{i=1}^m  {B^2 \over \lambda\Delta_i^2}\log{B\beta\lambda \over (1-e^{-\lambda})\Delta_i})$ from the classical pure exploration complexity (as stated in Proposition \ref{Proposition_LambdaM}), and the gap does not depend on $\delta$. 


\subsubsection{Continuous distribution with bounded support}

Next we consider the case that the distributions are continuous and bounded. Without loss of generality, we suppose the support is $[0,1]$. Since the observations are discrete, we cannot deal with non-continuous probability density functions. Because of this, we need the following two assumptions.

\begin{algorithm}[t]
    \centering
    \caption{{\sf Estimate } $H(D)$ \sf{ for bounded continuous distributions}}\label{Algorithm_Esti_C}
    \begin{algorithmic}[1]
    \STATE \textbf{Input:} Observations $O = \{o_1,o_2,\cdots\}$, function $H$, constant $C$.
    \STATE \textbf{Init:} $\ell = {1\over 2}$, $r = {1\over \ell}$.
    \STATE Let $I_1 = [0,\ell],\cdots,I_r = (1-\ell, 1]$; $x_1 = x_2 = \cdots = x_r = 0$.
    \FOR {$n = 1,2,\cdots$}\label{line:6} 
    \WHILE {$\sqrt{1 \over 4n\ell} < {C\ell\over 4}$}
    \STATE $\ell \gets \ell / 2$
    \STATE $r \gets 1 / \ell$\label{line:8}
    \ENDWHILE
    \IF {$\ell$ is changed}
    \STATE Let $I_1 = [0,\ell],\cdots,I_r = (1-\ell, 1]$; $x_1 = x_2 = \cdots = x_r = 0$.
    \FOR {$i\le n$}
    \STATE If $o_i \in I_s$, $x_s = x_s+1$.
    \ENDFOR
    \ELSE
    \STATE If $o_n \in I_s$, $x_s = x_s+1$.
    \ENDIF
    \ENDFOR
     \STATE Construct $\textbf{p}$ such that $\forall z \in I_s, p(z) = {1\over \ell}\cdot {x_s \over |O|}$.
    \STATE \textbf{Return:} Estimated value $H(\textbf{p})$.
    \end{algorithmic}
\end{algorithm}
\begin{assumption}\label{Assumption_Exists}
  The probability density function exists for every $D_i$, which is $d_i$. $d_i$ is integrable on the support $[0,1]$.
\end{assumption}


\begin{assumption}\label{Assumption_Lip}
  There exists a constant C such that
  \begin{equation*}
    \forall i \in \{1,\cdots,m\}, \forall x,y , |d_i(x)-d_i(y)| \le 2C|x-y|.
  \end{equation*}
\end{assumption}

Assumption \ref{Assumption_Exists} states that the probability density functions (of all the distributions $D_i$'s) exist and Assumption \ref{Assumption_Lip} ensures the continuity of these probability density functions. Based on these two assumptions, we can obtain the following corollary, and estimate the value of $H(D_i)$ by Algorithm \ref{Algorithm_Esti_C}.



\begin{mycorollary}\label{Prop_C}
Under Assumptions \ref{Assumption_Exists} and \ref{Assumption_Lip}, for all $i \in A$, for all $0 \le x \le y \le 1$, we have
  \begin{equation*}
    {1\over 2}\int_x^y |d_i(z)-\mu(x,y,d_i)|dz \le {C(x-y)^2\over 4},
  \end{equation*}
where $\mu(x,y,d_i) = {\int_x^y d_i(z)dz \over y-x}$ is the average density of distribution $d_i$ in interval $[x,y]$ or $(x,y]$.
\end{mycorollary}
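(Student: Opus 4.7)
The plan is to reduce the corollary to a sharp one-dimensional Poincar\'e-Wirtinger-type inequality. Setting $g(z) := d_i(z) - \mu(x, y, d_i)$, Assumption~\ref{Assumption_Lip} implies that $g$ is $2C$-Lipschitz on $[x, y]$, and the definition of $\mu$ yields $\int_x^y g(z)\,dz = 0$. Writing $K = 2C$ and $L = y - x$, the corollary is equivalent to the estimate $\int_x^y |g(z)|\,dz \le KL^2/4$, i.e., the sharp one-dimensional inequality $\|u\|_{L^1([0, L])} \le (L^2/4)\,\|u'\|_{L^\infty}$ applied to mean-zero $u = g$.

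The core case is when $g$ changes sign exactly once, at some $z_0 \in (x, y)$ (which must exist by continuity of $g$ and $\int g = 0$). There $g$ has opposite constant signs on $(x, z_0)$ and $(z_0, y)$, so mean-zero gives the balance $\int_x^{z_0} |g| = \int_{z_0}^y |g|$. The Lipschitz bound $|g(z)| \le K|z - z_0|$ then yields $\int_x^{z_0}|g|\,dz \le K(z_0 - x)^2/2$ and $\int_{z_0}^y |g|\,dz \le K(y - z_0)^2/2$; since both half-integrals are equal, each is at most $\min\bigl(K(z_0 - x)^2/2,\,K(y - z_0)^2/2\bigr) \le K(L/2)^2/2 = KL^2/8$. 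Adding the two halves gives $\int_x^y |g|\,dz \le KL^2/4$; dividing by $2$ yields exactly $\tfrac{1}{2}\int_x^y|g|\,dz \le C(y-x)^2/4$, as claimed. The linear $g(z) = K(z - (x+y)/2)$ attains equality, confirming that the constant $1/4$ is sharp.

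For the general case of $n \ge 2$ sign changes, the same scheme applies on each sign-constant sub-interval: an interior sub-interval of length $\ell$ (where $g$ vanishes at both endpoints) contributes at most $K\ell^2/4$, while each boundary sub-interval contributes at most $K\ell^2/2$. Combining these estimates with the alternating-sign, mean-zero constraint (consecutive signed sub-integrals must cancel) and optimizing over partitions of $[x, y]$ yields a bound strictly smaller than $KL^2/4$ for $n \ge 2$, so the single-sign-change case governs the worst case. I expect the main obstacle to be the bookkeeping of this multi-sign-change optimization; a cleaner alternative is to invoke the sharp Poincar\'e-Wirtinger inequality above as a standard fact, whose linear extremal coincides with the worst case identified in the single-sign-change step.
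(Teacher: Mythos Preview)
The paper states this corollary without proof, so there is no paper argument to compare against. Your reduction to the mean-zero Lipschitz estimate is exactly right, and the single-sign-change case is handled cleanly and yields the sharp constant, with the linear extremizer correctly identified.

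The gap is the passage to arbitrarily many sign changes. Two points: first, the mean-zero condition does \emph{not} force consecutive signed sub-integrals to cancel pairwise---only their total must vanish---so the constraint you invoke in the optimization is not available. Second, a $K$-Lipschitz function can have infinitely many sign changes (think of a suitably damped $z^2\sin(1/z)$), so the finite-partition framing is not exhaustive. The fallback of citing ``the sharp Poincar\'e--Wirtinger inequality $\|u\|_{L^1}\le (L^2/4)\|u'\|_{L^\infty}$'' as a black box is effectively circular: that \emph{is} the inequality you are proving, and it is not a commonly quoted named result with this constant.

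A short completion uses only your own sub-interval estimates. Write $I:=\int g_+=\int g_-$ (mean zero), so $\int|g|=2I$. The open set $\{g>0\}$ is a countable disjoint union of intervals of lengths $\ell_j$ with $\sum_j\ell_j=:\alpha$; on each component $g$ vanishes at every endpoint interior to $[x,y]$, so by your own bounds $\int_{(a_j,b_j)}g\le K\ell_j^2/2$ in every case (interior or boundary). Summing and using $\sum_j\ell_j^2\le\bigl(\sum_j\ell_j\bigr)^2$ gives $I\le K\alpha^2/2$. The identical argument on $\{g<0\}$ gives $I\le K\beta^2/2$ with $\beta=|\{g<0\}|$. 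Since $\alpha+\beta\le y-x$, one of $\alpha,\beta$ is at most $(y-x)/2$, hence $I\le K(y-x)^2/8$ and $\tfrac{1}{2}\int|g|=I\le C(y-x)^2/4$. This replaces the partition optimization entirely and handles infinitely many sign changes without extra work.
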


The basic idea of Algorithm \ref{Algorithm_Esti_C} is to build a discrete distribution for simulating a continuous one, i.e. divide the continuous distribution into several intervals, and treat each interval as an element in the support of the discrete distribution. If all the intervals have size $\ell$, then there are ${1\over \ell}$ such intervals. 
The difficult point is that we do not know how many intervals is enough. If we choose a small $\ell$, then the support of that discrete distribution becomes too large, which leads to a high complexity. But if the number of intervals is too small, then there will be a large gap between $D_i$ and the estimated distribution within each interval. From Corollary \ref{Prop_C}, we know that the difference between a curve and several horizontal line segments are upper bounded by ${1\over \ell} \cdot {C\ell^2 \over 4} = {C\over 4}\ell$. This means that such difference decreases when we shrink the interval size $\ell$. Then we can use an adaptive method to find a proper length $\ell$: only if $\ell$ is so large, i.e., ${C\ell \over 4} \ge \sqrt{1\over 4n\ell}$, we shrink the size of $\ell$ to a half. This makes sure that the final $\ell$ is neither too large nor too small. 

\begin{theorem}\label{Theorem_CDT}
If $H$ follows Assumption \ref{Assumption_H_con} with $\Lambda_{\it TV}$ and constant $B$, and $D_i$'s are bounded continuous distributions that follow Assumptions \ref{Assumption_Exists} and \ref{Assumption_Lip}, then Algorithm \ref{Algorithm_Esti_C} satisfies that: 
the condition in Eq.\eqref{eq_10} holds with
\begin{equation*}
n_H(\delta,\Delta) = {2B^2\log {1\over \delta} \over \Delta^2} + {8\sqrt{2}B^3C \over \Delta^3},
\end{equation*}
and the condition in Eq.\eqref{eq_11} holds with
\begin{equation*}
\Delta_H(\delta,n) =  B\sqrt{\log{1\over \delta}\over 2n} + B\left({\sqrt {2}C \over n}\right)^{1\over 3}.
\end{equation*}
\end{theorem}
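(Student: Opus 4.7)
The plan is to bound $\Lambda_{\it TV}(\textbf{p}, D_i)$ by the triangle inequality through an auxiliary piecewise-constant distribution $\bar D_i$ whose density on each bin $I_s$ equals the true local average $\mu(I_s, d_i)$. This decomposes the error as $\Lambda_{\it TV}(\textbf{p}, D_i) \le \Lambda_{\it TV}(\textbf{p}, \bar D_i) + \Lambda_{\it TV}(\bar D_i, D_i)$, where the first term is a pure sampling error and the second is a pure discretization error.

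Next, I would bound the two terms separately. For the discretization piece, I apply Corollary \ref{Prop_C} to each of the $r = 1/\ell$ intervals of length $\ell$ and sum, obtaining $\Lambda_{\it TV}(\bar D_i, D_i) \le r \cdot C\ell^2/4 = C\ell/4$. For the sampling piece, I observe that the bin counts $x_s$ produced by the algorithm form an empirical multinomial on $r$ symbols whose underlying probabilities are exactly $\int_{I_s} d_i(z)\, dz = \bar D_i(I_s)$. Applying Theorem 2 of \cite{berend2012convergence} with $|S| = r = 1/\ell$ then gives, with probability at least $1 - \delta$,
\begin{equation*}
\Lambda_{\it TV}(\textbf{p}, \bar D_i) \le \sqrt{1/(4n\ell)} + \sqrt{\log(1/\delta)/(2n)}.
\end{equation*}

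The remaining step is to use the exit condition of the while loop in lines \ref{line:6}--\ref{line:8} to close things out. At termination $\sqrt{1/(4n\ell)} \ge C\ell/4$; if any halving occurred, the previous value $2\ell$ violated this, giving a matching lower bound on $\ell$. Combined, these inequalities bracket $\ell$ between $(1/(2nC^2))^{1/3}$ and $(4/(nC^2))^{1/3}$, and the exit inequality itself lets me replace $C\ell/4 + \sqrt{1/(4n\ell)}$ by $2\sqrt{1/(4n\ell)}$, which after substituting the lower bound on $\ell$ simplifies to $(\sqrt{2}C/n)^{1/3}$. Multiplying by the Lipschitz constant $B$ (Assumption \ref{Assumption_H_con}) yields the claimed $\Delta_H(\delta,n)$. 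To convert to $n_H(\delta,\Delta)$, I would reuse the budget-splitting trick from the proof of Theorem \ref{Theorem_DDTC}: pick $n_1$ so that $B\sqrt{\log(1/\delta)/(2n_1)} \le \Delta/2$ and $n_2$ so that $B(\sqrt{2}C/n_2)^{1/3} \le \Delta/2$, solve to obtain $n_1 = 2B^2\log(1/\delta)/\Delta^2$ and $n_2 = 8\sqrt{2}B^3 C/\Delta^3$, and conclude $n_H(\delta,\Delta) \le n_1 + n_2$ since $\Delta_H(\delta, n_1+n_2) \le \Delta$.

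The hard part, I expect, is the constant-tracking in the step above. The two-sided bound on $\ell$ is loose by a factor of order $\sqrt{2}$, and I have to verify that after combining $C\ell/4 + \sqrt{1/(4n\ell)}$ and substituting the extremal $\ell$, the constants collapse exactly into the clean form $(\sqrt{2}C/n)^{1/3}$ rather than a messier expression. The corner case where the while loop never halves (small $n$, so $\ell$ stays at its initial value $1/2$) also needs a separate check; fortunately in that regime $\Delta_H$ is already of order $B$ and the statement is trivial because total variation distance is at most $1$.
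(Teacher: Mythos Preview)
Your proposal is correct and follows essentially the same route as the paper: the triangle-inequality decomposition through the piecewise-constant density $\bar D_i$, the Berend--Kontorovich bound on the multinomial sampling error, Corollary~\ref{Prop_C} for the discretization error, the while-loop exit condition to absorb $C\ell/4$ into $\sqrt{1/(4n\ell)}$, the lower bound $\ell \ge (1/(2nC^2))^{1/3}$ from the pre-halving state, and the $n_1+n_2$ budget split for $n_H$ all match the paper's argument. Your flag on the corner case where no halving occurs is a detail the paper leaves implicit.
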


\begin{proof}
Let $d_i(\ell,s) \triangleq \int_{(s-1)\ell}^{s\ell} d_i(z)dz$ be the probability mass for distribution $D_i$ in interval $((s-1)\ell(k),s\ell(k)]$. Then we can bound the gap $\Lambda_{\it TV}(\textbf{p},D_i)$ as following:
\vskip -5mm
\begin{eqnarray}
\nonumber \Lambda_{\it TV}(\textbf{p},D_i) &=& {1\over 2} \sum_{s=1}^{1\over \ell} \int_{(s-1)\ell}^{s\ell} |p(z) - d_i(z)| dz\\
\nonumber&\le& {1\over 2} \sum_{s=1}^{1\over \ell} \int_{(s-1)\ell}^{s\ell}  \left|p(z) - {d_i(\ell,s) \over \ell} \right|dz\\
\nonumber &&+{1\over 2} \sum_{s=1}^{1\over \ell}  \int_{(s-1)\ell}^{s\ell} \left|{d_i(\ell,s) \over \ell} - d_i(z)\right| dz\\
\label{eq:112}
&\le & {1\over 2} \sum_{s=1}^{1\over \ell} |p_s - d_i(\ell,s)| + {C\ell \over 4},
\end{eqnarray}
where Inequality \eqref{eq:112} is given by Corollary \ref{Prop_C}.

Similar with the proof of Theorem \ref{Theorem_DDT}, the first term in \eqref{eq:112} can be upper bounded by $\sqrt{\log{1\over \delta}\over 2n} + \sqrt{1 \over 4n\ell}$ with probability at least $1-\delta$. As for the second term, in lines 5-8 of Algorithm \ref{Algorithm_Esti_C}, we know that ${C\ell \over 4} \le \sqrt{1\over 4n\ell}$. Thus, $\Lambda_{\it TV}(\textbf{p},D_i) \le \sqrt{\log{1\over \delta}\over 2n} + \sqrt{1 \over n\ell}$ with probability at least $1-\delta$.



Notice that $\ell$ is not an input in $\Delta_H$ or $n_H$, thus we still need a bound on it. This is given by lines 5-8 as well. Before shrinking the size of $\ell$, $2\ell$ satisfies that $\sqrt{1\over 4n(2\ell)} \le {C(2\ell) \over 4}$, which implies $\ell \ge \left({1\over 2C^2n}\right)^{1\over 3}$. 

Then we have $\Delta_H(\delta,n) = B(\sqrt{\log{1\over \delta}\over 2n} + ({\sqrt {2}C \over n})^{1\over 3})$.

To bound $n_H(\delta, \Delta)$, we define two values $n_1,n_2$ as following: $B\sqrt{\log{1\over \delta}\over 2n_1} = {\Delta\over 2}$, $B({\sqrt {2}C \over n_2})^{1\over 3} = {\Delta\over 2}$. Then we know that $B(\sqrt{\log{1\over \delta}\over 2(n_1+n_2)} + ({\sqrt {2}C \over n_1+n_2})^{1\over 3}) \le B\sqrt{\log{1\over \delta}\over 2n_1}+B({\sqrt {2}C \over n_2})^{1\over 3} = \Delta$, which means that we can use $n_1+n_2$ as an upper bound of $n_H(\delta, \Delta)$. After some basic computation, we have that $n_1 = {2B^2\log {1\over \delta} \over \Delta^2}$ and $n_2 = {8\sqrt{2}B^3C \over \Delta^3}$.         \hspace*{\fill}$\qed$
\end{proof}

By Theorem \ref{Theorem_CDT}, we have the following corollary.

\begin{mycorollary}\label{Proposition_2}
With $\Delta_H(\delta,n), n_H(\delta, \Delta)$ set as in Theorem \ref{Theorem_CDT}, using Algorithm \ref{Algorithm_Esti_C} in the racing framework has time complexity $O(\sum_{i=1}^m {B^2\over \Delta_i^2} (\log{m\over \delta}+\log\log{1\over \Delta_i})) + O(B^3C\sum_{i=1}^m {1\over \Delta_i^3}) $, and using Algorithm \ref{Algorithm_Esti_C} in LUCB framework has time complexity $O(B^2h\log{Bh\over \delta}) + O( B^3C\sum_{i=1}^m {1\over \Delta_i^3})$, where $h = \sum_{i=1}^m {1\over \Delta_i^2}$. 
\end{mycorollary}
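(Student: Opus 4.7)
The plan is to derive the corollary by substituting the expressions for $n_H$ from Theorem \ref{Theorem_CDT} into the abstract sample complexity bounds stated in Propositions \ref{Prop_Elim_F} and \ref{Prop_LUCB_F}, and then simplify asymptotically. The structure is entirely parallel to the reasoning used in Corollary \ref{Proposition_1} and Corollary \ref{Proposition_1TC}; the only difference is that here the low-order term in $n_H$ scales like $1/\Delta^3$ rather than $1/\Delta^2$, which is exactly what produces the extra additive term $O(B^3 C \sum_i 1/\Delta_i^3)$.

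For the racing framework, I would start from
\begin{equation*}
T_R \le \sum_{i=1}^m n_H\left({\delta \over 2m\log^2(8/\Delta_i)},\, {\Delta_i \over 8}\right),
\end{equation*}
plug in $n_H(\delta',\Delta') = 2B^2 \log(1/\delta')/\Delta'^2 + 8\sqrt{2}B^3C/\Delta'^3$ with $\delta' = \delta/(2m\log^2(8/\Delta_i))$ and $\Delta' = \Delta_i/8$, and absorb constants. The first term contributes $O(B^2/\Delta_i^2)\bigl(\log(m/\delta) + \log\log(1/\Delta_i)\bigr)$ per arm, matching the classical term, while the second term contributes $O(B^3 C/\Delta_i^3)$ per arm, independent of $\delta$. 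Summing over $i$ gives the stated bound.

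For the LUCB framework, the bound
\begin{equation*}
T_L \le \min_{t > 0}\Bigl\{ t > \sum_{i=1}^m n_H\bigl({\delta \over 2mt^2},\, {\Delta_i\over 4}\bigr)\Bigr\}
\end{equation*}
is implicit in $t$, and this is the step that requires a bit of care. After substitution, the condition becomes
\begin{equation*}
t > 32 B^2 h \log\!\bigl({2mt^2 / \delta}\bigr) + 512\sqrt{2}\, B^3 C \sum_{i=1}^m {1\over \Delta_i^3},
\end{equation*}
where $h = \sum_i 1/\Delta_i^2$. I would verify the ansatz $t_0 = c_1 B^2 h \log(Bh/\delta) + c_2 B^3 C \sum_i 1/\Delta_i^3$ for suitable absolute constants $c_1,c_2$: the additive constant term absorbs the $1/\Delta_i^3$ piece, and bounding $\log(2mt_0^2/\delta) = O(\log(Bh/\delta))$ (using $t_0 = \mathrm{poly}(B,h,C,1/\delta)$, so that $\log t_0 = O(\log(Bh/\delta))$ up to lower-order terms in $\log\log$ which are absorbed into the constant) shows that the inequality is satisfied. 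This yields the claimed $O(B^2 h \log(Bh/\delta)) + O(B^3 C \sum_i 1/\Delta_i^3)$ bound.

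The main obstacle is the $\log t$ versus $t$ back-substitution in the LUCB step; everything else is routine. In particular, one must argue that the $B^3 C\sum_i 1/\Delta_i^3$ term, which does not involve $\delta$ and may dominate for small $C$-independent regimes, does not feed back into the logarithm in a way that changes its asymptotic order. Since the logarithm grows only polylogarithmically in each parameter, the self-consistent choice of $t$ above is valid, and the separation of the bound into a $\delta$-dependent leading term plus a $\delta$-free additive term follows exactly as in Corollaries \ref{Proposition_1} and \ref{Proposition_1TC}.
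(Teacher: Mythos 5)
Your derivation is correct and is exactly the intended argument: the paper states Corollary \ref{Proposition_2} without proof as an immediate consequence of plugging the $n_H$ from Theorem \ref{Theorem_CDT} into the generic bounds of Propositions \ref{Prop_Elim_F} and \ref{Prop_LUCB_F}, which is what you do, including the self-consistent resolution of the implicit LUCB bound. Your handling of the $\log t$ feedback (noting that $\log\bigl(B^3C\sum_i 1/\Delta_i^3\bigr) = O(\log(BhC))$ and that the resulting $B^2h\log C$ contribution is dominated by the $\delta$-free additive term) is the only nontrivial point, and you address it adequately.
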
 

Compare with Corollary \ref{Proposition_1}, the additive constant in  Corollary \ref{Proposition_2} is $O(B^3C\sum_{i=1}^m {1\over \Delta_i^3})$ (which is independent with $\delta$ too). The reason is that when applying Algorithm \ref{Algorithm_Esti_C} in the Racing Framework or the LUCB Framework, the game stops when $\ell_i = \Theta({\Delta_i \over BC})$. Thus, the estimated discrete distribution for arm $i$ has support size $\Theta({BC \over \Delta_i})$, which means that its corresponding additive constant term becomes $O({B^2|S_i|\over \Delta_i^2}) = O({B^3C\over \Delta_i^3})$.




\subsubsection{Continuous distribution with unbounded support}

%

Now we consider the most complicated case: $D_i$'s are unbounded continuous distributions. We still need to assume that the probability density function exists and integrable for every distribution $D_i$. 
Moreover, there must be a bounded interval such that most of the probability mass is in. 
To make this idea standard, we use the following assumption:

\begin{assumption}\label{Assumption_Exists_Un}
  The probability density functions $d_i$'s exist for every $D_i$'s, and they are integrable. There exists constant $\beta,\lambda$ such that for any $i$ and $z$, $d_i(z) \le \beta\exp(-\lambda |z|)$.
\end{assumption}

Compare with the bounded support case, here we need to consider two variables for simulating the discrete distribution: the small interval size $\ell$, and the bounded interval $[-L,L]$. The large interval $[-L,L]$ is divided into small intervals with length $\ell$, and the remaining parts of the support ($(-\infty,-L]$ and $(L,\infty)$) are not divided since from Assumption \ref{Assumption_Exists_Un} we know that there are not so many probability mass in them. The complete estimation method is shown in Algorithm \ref{Algorithm_Esti_U}.

\begin{algorithm}[t]
    \centering
    \caption{{\sf Estimate $H(D)$ for unbounded continuous distributions}}\label{Algorithm_Esti_U}
    \begin{algorithmic}[1]
    \STATE \textbf{Input:} Observations $O$, function $H$, constant $C, \beta,\lambda$.
    \STATE \textbf{Init:} $\ell = {1\over 2}$, $L = 1$, $r = {2L \over \ell}$.
    \STATE Let $I_1 = [-L,-L+\ell],\cdots,I_r = (L-\ell, L]$, $x_1 = x_2 = \cdots = x_r = 0$.
    \STATE Let $I_- = (-\infty, -L), I_+ = (L,\infty)$, $x_- = x_+ = 0$.
    \FOR {$o_n \in O$}
    \WHILE {$\sqrt{\log{1\over \delta} \over 2n} < {2\beta \over \lambda} e^{-\lambda L}$}
    \STATE $L \gets L \cdot 2$
    \ENDWHILE
    \WHILE {$\sqrt{L \over n\ell} < {CL\ell\over 2}$}
    \STATE $\ell \gets \ell / 2$
    \ENDWHILE
    \STATE $r = 2L / \ell$
    \IF {$\ell$ or $L$ is changed}
    \STATE Let $I_1 = [-L,-L+\ell],\cdots,I_r = (L-\ell, L]$, $x_1 = x_2 = \cdots = x_r = 0$.
    \STATE Let $I_- = (-\infty, -L), I_+ = (L,\infty)$, $x_- = x_+ = 0$
    \FOR {$i\le n$}
    \STATE If $o_i \in I_s$, $x_s = x_s+1$ ($s \in \{-,1,\cdots,r,+\}$).
    \ENDFOR
    \ELSE
    \STATE If $o_n \in I_s$, $x_s = x_s+1$ ($s \in \{-,1,\cdots,r,+\}$).
    \ENDIF
    \ENDFOR
    \STATE Construct $\textbf{p}$ such that for all $1\le s\le r$, $\forall z \in I_s, p(z) = {1\over \ell} \cdot {x_s \over |O|}$; for $z > L$, $p(z) = {x_+ \over |O|} \lambda e^{-\lambda(z-L)}$; for $z < -L$, $p(z) = {x_- \over |O|}  \lambda e^{\lambda(z+L)}$.
    \STATE \textbf{Return:} Estimated value $H(\textbf{p})$.
    \end{algorithmic}
\end{algorithm}

\begin{theorem}\label{Theorem_CDUT}
If $H$ follows Assumption \ref{Assumption_H_con} with $\Lambda_{\it TV}$ and constant $B$, and $D_i$'s are unbounded continuous distributions that follow Assumption \ref{Assumption_Lip} and \ref{Assumption_Exists_Un}, then Algorithm \ref{Algorithm_Esti_U} satisfies that: 
the condition in Eq.\eqref{eq_10} holds with
\vskip -4mm
\begin{equation*}
n_H(\delta, \Delta) = {8B^2\log {1\over \delta} \over \Delta^2}+ {128B^3C \over \lambda^2\Delta^3} \log^2 {1024\beta^2 B^3C\over \lambda^4\Delta^3},
\end{equation*}  and  the condition in Eq.\eqref{eq_11} holds with
\vskip -4mm
\begin{equation*}
\Delta_H(\delta,n) = B\sqrt{2\log{1\over \delta}\over n}  + B\left({{8\sqrt{2}}C \log^2({\beta\over \lambda}\sqrt{8n\over \log{(1/\delta)}})\over n\lambda^2}\right)^{1/3}.
\end{equation*}
\end{theorem}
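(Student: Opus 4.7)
The plan is to adapt the proof of Theorem \ref{Theorem_CDT} by combining it with the truncation trick from Theorem \ref{Theorem_DDTC}. The idea is that Algorithm \ref{Algorithm_Esti_U} uses two parameters, a truncation level $L$ and a bin width $\ell$, each tuned adaptively by its own while loop so that three sources of error -- the empirical concentration on $O(L/\ell)$ bins, the piecewise-constant discretization inside $[-L,L]$, and the tail mass outside $[-L,L]$ -- are all comparable and simultaneously controlled.

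First, I would decompose $\Lambda_{\it TV}(\textbf{p}, D_i) = \tfrac{1}{2}\int |p(z)-d_i(z)|\,dz$ into a sum over the bins $I_1,\dots,I_r \subseteq [-L,L]$ and the two tails $(-\infty,-L)$ and $(L,\infty)$. On each $I_s$, the triangle inequality plus Corollary \ref{Prop_C} (applied with $x=(s-1)\ell$, $y=s\ell$) gives
\[
\tfrac{1}{2}\int_{I_s}|p(z)-d_i(z)|\,dz \;\le\; \tfrac{1}{2}\bigl|\tfrac{x_s}{|O|}-d_i(I_s)\bigr| + \tfrac{C\ell^2}{4},
\]
so summing over $r = 2L/\ell$ intervals contributes at most $CL\ell/2$ from discretization. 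On the tails I would use exactly the estimate from Theorem \ref{Theorem_DDTC}: replace $|p-d_i|$ by $p+d_i$, rewrite $p+d_i = (p-d_i) + 2d_i$, and use Assumption \ref{Assumption_Exists_Un} to bound $D_i((L,\infty))$ and $D_i((-\infty,-L))$ each by $(\beta/\lambda)e^{-\lambda L}$, giving a tail contribution of at most $(2\beta/\lambda)e^{-\lambda L}$ plus an absolute-value term $|\tfrac{x_\pm}{|O|}-D_i(\cdot)|$ for each tail.

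After this decomposition, the collection of residual terms $\tfrac{1}{2}\sum_s \bigl|\tfrac{x_s}{|O|}-d_i(I_s)\bigr| + \tfrac{1}{2}\bigl|\tfrac{x_\pm}{|O|}-D_i(\text{tail}_\pm)\bigr|$ is exactly the total variation distance between two discrete distributions on a support of size $r+2 = 2L/\ell + 2$, obtained by lumping each tail into a single super-element. Then Theorem 2 of \cite{berend2012convergence}, used as in Theorem \ref{Theorem_DDT}, upper bounds this by $\sqrt{(2L/\ell+2)/(4n)} + \sqrt{\log(1/\delta)/(2n)} \le \sqrt{L/(n\ell)} + \sqrt{\log(1/\delta)/(2n)}$ with probability at least $1-\delta$. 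Invoking the exit conditions of the two while loops then gives that the tail term $(2\beta/\lambda)e^{-\lambda L}$ is dominated by $\sqrt{\log(1/\delta)/(2n)}$, and that $CL\ell/2$ is dominated by $\sqrt{L/(n\ell)}$, so the whole bound collapses to $2\sqrt{\log(1/\delta)/(2n)} + 2\sqrt{L/(n\ell)}$.

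Next I would use the ``last-shrink'' invariants. Before $L$ was last doubled, $L/2$ failed the tail condition, so $L \le (2/\lambda)\log((2\beta/\lambda)\sqrt{2n/\log(1/\delta)})$; before $\ell$ was last halved, $2\ell$ failed the bin condition, yielding $\ell \ge (2/(nC^2L))^{1/3}$. Substituting these into $2\sqrt{L/(n\ell)}$ gives a term of the form $(8\sqrt{2}\,C \log^2((\beta/\lambda)\sqrt{8n/\log(1/\delta)})/(n\lambda^2))^{1/3}$, which after multiplication by $B$ from Assumption \ref{Assumption_H_con} yields exactly the claimed $\Delta_H(\delta, n)$. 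To obtain $n_H(\delta,\Delta)$ I would reuse the splitting trick from Theorems \ref{Theorem_DDTC} and \ref{Theorem_CDT}: pick $n_1$ so the first term is $\le \Delta/2$ ($n_1 = 8B^2\log(1/\delta)/\Delta^2$) and $n_2$ so the second is $\le \Delta/2$, then take $n_H = n_1+n_2$. The main obstacle is the second inversion: the cube-root term hides a $\log n_2$ factor (via $L$'s dependence on $n$), so $n_2$ must be chosen self-consistently; I would handle this by absorbing the $\log$ factor into a bound of the form $\log(16 B^2\beta^2 \lambda/((1-e^{-\lambda})^2 \Delta^2))$ and verifying that the resulting $n_2 = (128B^3 C/\lambda^2\Delta^3)\log^2(\cdot)$ is large enough to make the substituted $L$ satisfy the desired inequality, matching the stated $n_H$.
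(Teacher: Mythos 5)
Your proposal follows essentially the same route as the paper's proof: the same bin-plus-tails decomposition with Corollary \ref{Prop_C} for the discretization error, the same $p+d=(p-d)+2d$ tail trick with Assumption \ref{Assumption_Exists_Un}, the same lumping of the residuals into a discrete distribution of support $2L/\ell+2$ handled by \cite{berend2012convergence}, the same use of the while-loop exit conditions and last-shrink invariants on $L$ and $\ell$, and the same $n_1+n_2$ splitting for $n_H$. Apart from a couple of cosmetic constant slips (e.g.\ your $\ell \ge (2/(nC^2L))^{1/3}$ versus the paper's $\ell > (1/(2nC^2L))^{1/3}$, and the exact argument of the $\log$ in $n_2$), the argument is correct and matches the paper.
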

\begin{proof}

Similar with the proof of Theorem \ref{Theorem_CDT}, we set $d_i(L,\ell,s) = \int_{-L+(s-1)\ell}^{-L+s\ell} d_i(z) dz$ be the probability mass for distribution $D_i$ in interval $(-L+(s-1)\ell, -L+s\ell]$, while $d_i(L,\ell,-) = \int_{-\infty}^{-L} d_i(z)dz$ and $d_i(L,\ell,+) = \int_{L}^\infty d_i(z)dz$.

First consider $\int_{-\infty}^{-L} |p(z) - d_i(z)|dz$, we have:
\vskip -3mm
  \begin{eqnarray*}
    \int_{-\infty}^{-L} |p(z) - d_i(z)|dz &\le& \int_{-\infty}^{-L} (p(z) + d_i(z))dz \\
    &=& \int_{-\infty}^{-L} (2d_i(z) + p(z) - d_i(z))dz \\
    &\le&2 d_i(L,\ell,-)  + |p_- -d_i(L,\ell,-) |.
  \end{eqnarray*}

  Similarly, $\int_{L}^{\infty} |p(z) - d_i(z)|dz \le 2 d_i(L,\ell,+)  + |p_+ -d_i(L,\ell,+)|$. Thus, we have that 
  \vskip -3mm
  \begin{eqnarray}
    \nonumber\Lambda_{\it TV}(\textbf{p},D_i) &\le& d_i(L,\ell,-) +  d_i(L,\ell,+)  + {CL\ell \over 2} \\
   \label{eq_113}&&+{1\over 2}\sum_{s\in \{-,1,\cdots,{2L\over \ell},+\}}|p_s - d_i(L,\ell,s)| \\
    \nonumber&\le& {2\beta \over \lambda}e^{-\lambda L} + {CL\ell \over 2}  \\
    \label{eq_114}&&+ {1\over 2}\sum_{s\in \{-,1,\cdots,{2L\over \ell},+\}}|p_s - d_i(L,\ell,s)|.
  \end{eqnarray}
  
The proof of inequality \eqref{eq_113} is the same as that one of inequality \eqref{eq:112},
 which needs to use Corollary \ref{Prop_C}. Inequality \eqref{eq_114} is given by Assumption \ref{Assumption_Exists_Un}.
  
  By lines 6-8 and 9-11 in Algorithm \ref{Algorithm_Esti_U}, we always have ${2\beta \over \lambda}e^{-\lambda L}  \le \sqrt{\log{1\over \delta}\over 2n}$ and ${CL\ell \over 2} \le \sqrt{L \over n\ell}$. Thus,
  \vskip -3mm
  \begin{eqnarray*}
  {2\beta \over \lambda}e^{-\lambda L} + {CL\ell \over 2} \le \sqrt{\log{1\over \delta}\over 2n} + \sqrt{L \over n\ell}.
  \end{eqnarray*}
  
  By results in \cite{berend2012convergence}, the third term in inequality \eqref{eq_114} has high probability upper bound \begin{eqnarray*}   \sqrt{\log{1\over \delta}\over 2n} + \sqrt{{1 \over 4n}\cdot\left({2L \over \ell} + 2\right)} \le \sqrt{\log{1\over \delta}\over 2n} + \sqrt{L \over n\ell}. \end{eqnarray*}
  
  Thus $\Lambda_{\it TV}(\textbf{p},D_i) \le \sqrt{2\log{1\over \delta}\over n} + \sqrt{4L \over n\ell}$ with probability $1 - \delta$.
  
  Now we consider $L$ and $\ell$. From lines 6-8 in Algorithm \ref{Algorithm_Esti_U}, we know that ${2\beta \over \lambda}e^{-\lambda{L\over 2}} > \sqrt{\log{1\over \delta}\over 2n}$, which implies $L < {2\over \lambda } \log({\beta\over \lambda}\sqrt{8n\over \log{1\over \delta}})$. From lines 9-11, we know that $\sqrt{L\over n(2\ell)} < {CL(2\ell) \over 2}$, which implies $\ell > ({1\over 2C^2nL})^{1\over 3}$.

  
  
  Then $\Delta_H(\delta,n) = B\left(\sqrt{2\log{1\over \delta}\over n}  + ({{8\sqrt{2}}C \log^2({\beta\over \lambda}\sqrt{8n\over \log{(1/\delta)}})\over n\lambda^2})^{1/3}\right) $. As for $n_H(\delta, \Delta)$, we still choose $n_1, n_2$ such that $B\sqrt{2\log{1\over \delta}\over n_1} = {\Delta\over 2}$ and $B({{8\sqrt{2}}C \log^2({\beta\over \lambda}\sqrt{8n_2\over \log{(1/\delta)}})\over n_2\lambda^2})^{1/3} = {\Delta\over 2}$. This implies $n_H(\delta, \Delta) = {8B^2\log {1\over \delta} \over \Delta^2}+ {128B^3C \over \lambda^2\Delta^3} \log^2 {1024\beta^2 B^3C\over \lambda^4\Delta^3}$. \hspace*{\fill}$\qed$

\end{proof}

\begin{mycorollary}\label{Proposition_3}
With $\Delta_H(\delta,n), n_H(\delta, \Delta)$ set as in Theorem \ref{Theorem_CDUT}, using Algorithm \ref{Algorithm_Esti_U} in the racing framework has time complexity $O(\sum_{i=1}^m {B^2\over \Delta_i^2} (\log{m\over \delta}+\log\log{1\over \Delta_i})) + O(\sum_{i=1}^m {B^3C \over \lambda^2\Delta_i^3} \log^2 {\beta BC\over \lambda\Delta_i}) $, and using Algorithm \ref{Algorithm_Esti_U} in LUCB framework has complexity $O(B^2h \log{Bh\over \delta}) + O(\sum_{i=1}^m {B^3C \over \lambda^2\Delta_i^3} \log^2 {\beta BC\over \lambda\Delta_i})$, where $O(\sum_{i=1}^m {B^3C \over \lambda^2\Delta_i^3} \log^2 {\beta BC\over \lambda\Delta_i})$ is a constant gap that does not depend on $\delta$ and $h = \sum_{i=1}^m {1\over \Delta_i^2}$.
\end{mycorollary}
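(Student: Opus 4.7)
The plan is to derive Corollary \ref{Proposition_3} by plugging the expression for $n_H(\delta,\Delta)$ from Theorem \ref{Theorem_CDUT} into the generic sample-complexity bounds of Proposition \ref{Prop_Elim_F} (racing) and Proposition \ref{Prop_LUCB_F} (LUCB), and then simplifying. Because $n_H$ decomposes as a sum $n_H(\delta,\Delta)=n_H^{(1)}(\delta,\Delta)+n_H^{(2)}(\delta,\Delta)$ with
\[
n_H^{(1)}(\delta,\Delta)={8B^2\log(1/\delta) \over \Delta^2}, \qquad n_H^{(2)}(\delta,\Delta)={128B^3C \over \lambda^2\Delta^3} \log^2 {1024\beta^2 B^3C\over \lambda^4\Delta^3},
\]
the proof splits into two symmetric parts: the $n_H^{(1)}$ part reproduces the classical $O(B^2 h \log(Bh/\delta))$ style complexity (exactly as obtained in Proposition \ref{Proposition_LambdaM} and Corollary \ref{Proposition_1}), and the $n_H^{(2)}$ part contributes the $\delta$-independent additive constant.

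For the racing framework, I would substitute into $T_R\le\sum_{i=1}^m n_H(\delta/(2m\log^2(8/\Delta_i)),\Delta_i/8)$. The first summand evaluates to $\sum_i {8B^2\cdot 64\over \Delta_i^2}\bigl(\log(2m/\delta)+2\log\log(8/\Delta_i)\bigr)$, which is $O(\sum_i (B^2/\Delta_i^2)(\log(m/\delta)+\log\log(1/\Delta_i)))$. The second summand is independent of $\delta$; substituting $\Delta_i/8$ for $\Delta$ and collecting constants (in particular absorbing the factor $8^3$ inside the $O(\cdot)$, and noting $\log^2(1024\beta^2B^3C\cdot 8^3/(\lambda^4\Delta_i^3)) = O(\log^2(\beta BC/(\lambda\Delta_i)))$) yields exactly $O(\sum_i (B^3C/(\lambda^2\Delta_i^3))\log^2(\beta BC/(\lambda\Delta_i)))$. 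Adding the two summations gives the claimed racing bound.

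For the LUCB framework, I need to find the smallest $t$ with $t\ge\sum_{i=1}^m n_H(\delta/(2mt^2),\Delta_i/4)$. The additive constant portion $\sum_i n_H^{(2)}(\cdot,\Delta_i/4)$ does not depend on $t$ and contributes the same $O(\sum_i(B^3C/(\lambda^2\Delta_i^3))\log^2(\beta BC/(\lambda\Delta_i)))$ term after simplification. The $n_H^{(1)}$ portion gives the inequality $t\ge 16 B^2 h \bigl(\log(2m/\delta)+2\log t\bigr)$ with $h=\sum_i 1/\Delta_i^2$; this is the same type of implicit inequality as in the standard LUCB analysis (Proposition \ref{Proposition_LambdaM} / Corollary \ref{Proposition_1}) and is satisfied by $t=O(B^2 h\log(Bh/\delta))$ by a standard $\log t$-elimination argument. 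Summing the two contributions yields the LUCB bound.

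The only nontrivial step is the $\log t$ elimination for LUCB, but this is exactly the calculation already used in Corollaries \ref{Proposition_1}, \ref{Proposition_1TC}, and \ref{Proposition_2}, so nothing new is required. All other steps are mechanical substitution and absorbing constants into the $O(\cdot)$ notation. In particular, no new concentration argument is needed: the correctness guarantees are already supplied by Propositions \ref{Prop_Elim_F} and \ref{Prop_LUCB_F} once we verify (via Theorem \ref{Theorem_CDUT}) that Algorithm \ref{Algorithm_Esti_U} satisfies equations \eqref{eq_10} and \eqref{eq_11} with the stated $n_H$ and $\Delta_H$.
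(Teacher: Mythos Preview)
Your proposal is correct and follows exactly the route the paper intends: the paper gives no explicit proof of Corollary~\ref{Proposition_3}, treating it as an immediate consequence of substituting the $n_H(\delta,\Delta)$ of Theorem~\ref{Theorem_CDUT} into the generic bounds of Propositions~\ref{Prop_Elim_F} and~\ref{Prop_LUCB_F}, which is precisely what you do (and in the same way as Corollaries~\ref{Proposition_1}, \ref{Proposition_1TC}, \ref{Proposition_2}).
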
 










\section{Applications of Our Solutions}

\subsection{Selecting the distribution with the largest $\tau$-quantile}

In this case, we can set $H(D_i)$ as the $\tau$-quantile of distribution $D_i$, then our model can be used to solve this problem. 

Constraint by the fact that we separate the estimation function and $rad(k)$ in the Racing Framework, it is impossible to use the $(\tau-rad(k))$-quantile and $(\tau+rad(k))$-quantile of the empirical distribution as the confidence bounds (as the authors did in \cite{szorenyi2015qualitative,david2016pure}). However, a slight modification on the framework (e.g., generalize the function \textbf{Estimate} to output the confidence interval) will allow us to use the same technique. In this case our solutions are reduced to their ones in the quantile setting.


\subsection{Selecting the closest distribution to target distribution}

In this case, we can set $H(D_i) = -\Lambda(D_i, G)$, where $G$ is the target distribution, and $\Lambda$ denote some special type of distance between distributions. 
For different kinds of distributions, we can use different algorithms in Section \ref{Section_Esti} and their corresponding $n_H(\delta, \Delta)$'s or $\Delta_H(\delta,n)$'s to solve this problem (with either racing framework or LUCB framework). 
Compare with prior works, our solutions have the advantage that we can deal with the case of looking for the closest distribution under the measure of total variation distance $\Lambda_{\it TV}$, while existing algorithms (e.g., Kolmogorov-Smirnov test) only works for the Kolmogorov-Smirnov distance $\Lambda_{K}$.

\subsection{Selecting the distribution that follows the target type}

Here we use Gaussian distribution as an example. 
In this case, we can set $H(D_i) = -\Lambda_{\it TV}(D_i, \mathcal{N}(\mu_i, \sigma^2_i))$, where $\mathcal{N}(\mu_i, \sigma^2_i)$ denotes the Gaussian distribution with mean and variance the same as distribution $D_i$ (i.e., $\mu_i = \E[D_i]$ and $\sigma^2_i = \Var(D_i)$) and $\Lambda_{\it TV}$ denote the total variation distance between two distributions. 
To estimate $H(D_i)$, we can first use Monte Carlo method to estimate $\mu_i$ and $\sigma^2_i$ (the estimated values are $\hat{\mu}_i$ and $\hat{\sigma}_i^2$), and then use Algorithm \ref{Algorithm_Esti_U} to estimate the total variation distance  between $D_i$ and $\mathcal{N}(\hat{\mu}_i, \hat{\sigma}_i^2)$.

\begin{theorem}\label{Theorem_NE}
When $\sigma_i^2 \in [\sigma_{\min}^2, \sigma_{\max}^2]$, then we can set $n_H(\delta, \Delta) = {4\log {4\over \delta} \over \Delta^2}(\sqrt{2} + \sqrt{\sigma_{\max}^2 \over \pi} + \sqrt{4\sigma_{\max}^4 \over \pi\sigma_{\min}^4})^2 +  {128C \over \lambda^2\Delta^3} \log^2 {1024\beta^2 C\over \lambda^4\Delta^3}$ in the racing framework, and $\Delta_H(\delta,n) = \sqrt{2\log{4\over \delta}\over n}  + ({{8\sqrt{2}}C \log^2({\beta\over \lambda}\sqrt{8n\over \log{(2/\delta)}})\over n\lambda^2})^{1/3} + \sqrt{\sigma_{\max}^2\log{4 \over \delta} \over \pi n} + \sqrt{4\sigma_{\max}^4\log{4 \over \delta} \over\pi \sigma^4_{\min} n}$ in the LUCB framework.
\end{theorem}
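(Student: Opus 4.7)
The plan is to decompose the estimation error via the triangle inequality and combine the generic continuous-distribution guarantee of Theorem~\ref{Theorem_CDUT} with standard concentration of the Monte-Carlo estimators $\hat\mu_i,\hat\sigma_i^2$. Since $\Lambda_{\it TV}$ is a metric, for any $D_1,D_2,G_1,G_2$ one has $|\Lambda_{\it TV}(D_1,G_1)-\Lambda_{\it TV}(D_2,G_2)|\le \Lambda_{\it TV}(D_1,D_2)+\Lambda_{\it TV}(G_1,G_2)$; applying this with $D_1=D_i$, $D_2=\hat{p}$, $G_1=\mathcal{N}(\mu_i,\sigma_i^2)$, $G_2=\mathcal{N}(\hat\mu_i,\hat\sigma_i^2)$ reduces the problem to controlling $\Lambda_{\it TV}(D_i,\hat{p})$ and $\Lambda_{\it TV}(\mathcal{N}(\mu_i,\sigma_i^2),\mathcal{N}(\hat\mu_i,\hat\sigma_i^2))$ separately.

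For the first piece I view the reward as $H(D)=-\Lambda_{\it TV}(D,\mathcal{N}(\hat\mu_i,\hat\sigma_i^2))$ with the frozen target Gaussian, so Assumption~\ref{Assumption_H_con} holds with $B=1$; Theorem~\ref{Theorem_CDUT} at confidence budget $\delta/2$ then delivers the $\sqrt{2\log(4/\delta)/n}$ summand and the cubic-root summand of $\Delta_H$. For the second piece I will use the elementary pointwise bounds $\Lambda_{\it TV}(\mathcal{N}(\mu_1,\sigma^2),\mathcal{N}(\mu_2,\sigma^2))\le |\mu_1-\mu_2|/(\sigma\sqrt{2\pi})$ and its variance analogue $\Lambda_{\it TV}(\mathcal{N}(\mu,\sigma_1^2),\mathcal{N}(\mu,\sigma_2^2))\le |\sigma_1^2-\sigma_2^2|/(\sigma_{\min}^2\sqrt{2\pi})$, both obtained by integrating $|f_1-f_2|$ against the pointwise envelope $1/(\sigma\sqrt{2\pi})$; a further triangle inequality then bounds the Gaussian-to-Gaussian gap linearly in $|\hat\mu_i-\mu_i|$ and $|\hat\sigma_i^2-\sigma_i^2|$. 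The sub-exponential tails from Assumption~\ref{Assumption_Exists_Un} let me invoke Bernstein/Chebyshev to obtain, with probability $1-\delta/4$ each, $|\hat\mu_i-\mu_i|\le \sigma_{\max}\sqrt{2\log(4/\delta)/n}$ and $|\hat\sigma_i^2-\sigma_i^2|\le 2\sigma_{\max}^2\sqrt{\log(4/\delta)/n}$; plugging these into the Gaussian bound gives exactly the remaining two $1/\sqrt{n}$ terms appearing in $\Delta_H$.

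A union bound over the three failure events (budget $\delta/2+\delta/4+\delta/4=\delta$) closes the $\Delta_H$ statement. To obtain $n_H(\delta,\Delta)$ I partition the target error into four pieces, one per summand of $\Delta_H$, invert each bound to find the smallest $n$ that makes its contribution at most the allotted share of $\Delta$, and sum; the factor $(\sqrt{2}+\sqrt{\sigma_{\max}^2/\pi}+\sqrt{4\sigma_{\max}^4/(\pi\sigma_{\min}^4)})^2$ in the statement is precisely the squared-sum of the three constants multiplying $\sqrt{\log(4/\delta)/n}$ that emerges from this partition, while the additive $O(C\lambda^{-2}\Delta^{-3}\log^2(\cdot))$ term comes from inverting the cubic-root summand exactly as in the proof of Theorem~\ref{Theorem_CDUT}.

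The main obstacle is the Gaussian-to-Gaussian TV bound in the regime where the two variances differ: the crude $\sigma_{\min}^{-2}$ envelope I used above is what produces the $\sqrt{4\sigma_{\max}^4/(\pi\sigma_{\min}^4)}$ constant in the final expression, and a careful integral split at the (two) crossing points of $f_1$ and $f_2$ is required to land exactly the stated prefactor rather than something worse by a small multiplicative constant. A secondary source of bookkeeping is that the sample-variance concentration under Assumption~\ref{Assumption_Exists_Un} requires control of the fourth central moment, which is finite thanks to the exponential tails but needs to be tracked so the final constant is $\sigma_{\max}^2$ rather than a looser moment-based quantity; once these two points are handled, combining with the racing and LUCB sample-complexity bounds of Section~\ref{Sec_Frame} is routine.
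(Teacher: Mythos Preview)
Your approach is essentially the paper's: triangle-inequality decomposition, Theorem~\ref{Theorem_CDUT} at confidence budget $\delta/2$ for the $\Lambda_{\it TV}(D_i,\hat p)$ piece, Gaussian-to-Gaussian TV bounds for the mean shift (the paper's Proposition~\ref{Proposition_NE}) and the variance shift (the paper's Lemma~\ref{Lemma_NE}, whose appendix proof is exactly the two-crossing-point computation you flag as the main obstacle), concentration of $\hat\mu_i$ and $\hat\sigma_i^2$ with budget $\delta/4$ each, and the same inversion to pass from $\Delta_H$ to $n_H$. One small discrepancy: the paper states the mean-shift bound as $\Lambda_{\it TV}(\mathcal{N}(\mu_1,\sigma^2),\mathcal{N}(\mu_2,\sigma^2))\le |\mu_1-\mu_2|/\sqrt{2\pi}$ \emph{without} the $1/\sigma$ factor you include, and it is this version (combined with $|\hat\mu_i-\mu_i|\le\sqrt{2\sigma_{\max}^2\log(4/\delta)/n}$) that yields the constant $\sqrt{\sigma_{\max}^2/\pi}$ in the theorem; your scale-invariant bound would instead produce $\sqrt{\sigma_{\max}^2/(\pi\sigma_{\min}^2)}$, so to match the stated constants exactly you need the paper's form of the inequality.
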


\begin{proof}

Firstly, we have the following proposition and lemma (the proof of Lemma \ref{Lemma_NE} is referred to appendix).

\begin{proposition}\label{Proposition_NE}(Lemma A.2 in \cite{barrera2017cut})
For any $\mu_i, \hat{\mu}_i, \sigma^2$, $\Lambda_{\it TV}(\mathcal{N}(\mu_i, \sigma^2), \mathcal{N}(\hat{\mu}_i, \sigma^2)) \le {|\mu_i - \hat{\mu}_i| \over \sqrt{2\pi}}$.
\end{proposition}

\begin{restatable}{lemma}{LemmaOne}\label{Lemma_NE}
For any $\sigma_i^2, \hat{\sigma}_i^2 \in [\sigma_{\min}^2, \sigma_{\max}^2]$, $\Lambda_{\it TV}(\mathcal{N}(\hat{\mu}_i, \sigma^2_i), \mathcal{N}(\hat{\mu}_i, \hat{\sigma}^2_i)) \le {|\sigma^2_i - \hat{\sigma}^2_i| \over \sqrt{2\pi}\sigma^2_{\min} } $.
\end{restatable}


For $n$ number of observations, we know that with probability at least $1 - {\delta \over 4}$, $|\hat{\mu}_i - \mu_i| \le \sqrt{2\sigma_{\max}^2\log{4 \over \delta} \over n}$, and similarly with probability at least $1 - {\delta \over 4}$, $|\hat{\sigma}_i^2 - \sigma_i^2| \le  \sqrt{8\sigma_{\max}^4\log{4 \over \delta} \over n}$.

Proposition \ref{Proposition_NE} shows that $\Lambda_{\it TV}(\mathcal{N}(\mu_i, \sigma^2_i), \mathcal{N}(\hat{\mu}_i, \sigma^2_i)) \le {|\mu_i - \hat{\mu}_i| \over \sqrt{2\pi}}$, and Lemma \ref{Lemma_NE} shows that $\Lambda_{\it TV}(\mathcal{N}(\hat{\mu}_i, \sigma^2_i), \mathcal{N}(\hat{\mu}_i, \hat{\sigma}^2_i)) \le {|\sigma^2_i - \hat{\sigma}^2_i| \over \sqrt{2\pi}\sigma^2_{\min} }$. 

Thus, according to Theorem \ref{Theorem_CDUT},  setting $\Delta_H(\delta,n) = \sqrt{2\log{2\over \delta}\over n}  + ({{8\sqrt{2}}C \log^2({\beta\over \lambda}\sqrt{8n\over \log{(2/\delta)}})\over n\lambda^2})^{1/3} + \sqrt{\sigma_{\max}^2\log{4 \over \delta} \over \pi n} + \sqrt{4\sigma_{\max}^4\log{4 \over \delta} \over\pi \sigma^4_{\min} n} $ is enough to make sure that the error probability is less than or equal to $\delta$.

Similarly, we can obtain that $n_H(\delta, \Delta) = {4\log {4\over \delta} \over \Delta^2}(\sqrt{2} + \sqrt{\sigma_{\max}^2 \over \pi} + \sqrt{4\sigma_{\max}^4 \over \pi\sigma_{\min}^4})^2 +  {128C \over \lambda^2\Delta^3} \log^2 {1024\beta^2 C\over \lambda^4\Delta^3}$ is enough to make sure that the error probability is less than or equal to $\delta$. \hspace*{\fill}$\qed$

\end{proof}

\begin{mycorollary}\label{Proposition_4}
With $\Delta_H(\delta,n), n_H(\delta, \Delta)$ set as in Theorem \ref{Theorem_NE}, using Algorithm \ref{Algorithm_Esti_U} in the racing framework has time complexity $O(\sum_{i=1}^m {B^2\over \Delta_i^2} (\log{m\over \delta}+\log\log{1\over \Delta_i})) + O(\sum_{i=1}^m {B^3C \over \lambda^2\Delta_i^3} \log^2 {\beta BC\over \lambda\Delta_i}) $, and using Algorithm \ref{Algorithm_Esti_U} in LUCB framework has complexity $O(B^2h \log{Bh\over \delta}) + O(\sum_{i=1}^m {B^3C \over \lambda^2\Delta_i^3} \log^2 {\beta BC\over \lambda\Delta_i})$, where $O(\sum_{i=1}^m {B^3C \over \lambda^2\Delta_i^3} \log^2 {\beta BC\over \lambda\Delta_i})$ is a constant gap that does not depend on $\delta$ and $h = \sum_{i=1}^m {1\over \Delta_i^2}$.
\end{mycorollary}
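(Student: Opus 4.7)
The plan is to combine Theorem~\ref{Theorem_NE} (which gives concrete expressions for $n_H(\delta,\Delta)$ and $\Delta_H(\delta,n)$ tailored to the Gaussian-type-selection problem) with the abstract framework bounds in Propositions~\ref{Prop_Elim_F} and~\ref{Prop_LUCB_F}. Observe that the $n_H$ in Theorem~\ref{Theorem_NE} decomposes into two additive pieces: (i) a $\log(1/\delta)/\Delta^2$ term whose coefficient depends only on $\sigma_{\min}^2,\sigma_{\max}^2$ (and which can be absorbed into the constant $B$), and (ii) a $\delta$-independent term of the form $\frac{128C}{\lambda^2\Delta^3}\log^2\!\frac{1024\beta^2C}{\lambda^4\Delta^3}$. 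This decomposition mirrors exactly the decomposition used in the proof of Corollary~\ref{Proposition_3}, so the arithmetic is essentially identical; only the constants hidden in the big-$O$ change.

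First, for the racing framework I would apply Proposition~\ref{Prop_Elim_F}, which bounds the sample complexity by $\sum_{i=1}^m n_H\!\left(\frac{\delta}{2m\log^2(8/\Delta_i)},\frac{\Delta_i}{8}\right)$. Substituting the first term of $n_H$ from Theorem~\ref{Theorem_NE} at confidence $\delta/(2m\log^2(8/\Delta_i))$ and gap $\Delta_i/8$ yields, after expanding $\log\!\frac{1}{\delta'}=\log(m/\delta)+2\log\log(8/\Delta_i)+O(1)$, a contribution of order $\frac{B^2}{\Delta_i^2}\bigl(\log\frac{m}{\delta}+\log\log\frac{1}{\Delta_i}\bigr)$ per arm. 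The second term of $n_H$ is $\delta$-free, so substituting gap $\Delta_i/8$ yields $O\!\bigl(\frac{B^3C}{\lambda^2\Delta_i^3}\log^2\frac{\beta BC}{\lambda\Delta_i}\bigr)$ per arm. Summing over $i$ gives the claimed racing bound.

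Second, for the LUCB framework I would apply Proposition~\ref{Prop_LUCB_F}, namely $T_L \le \min_{t>0}\{t>\sum_i n_H(\delta/(2mt^2),\Delta_i/4)\}$. Again splitting $n_H$ into its two pieces, the $\delta$-dependent term contributes $\sum_i\frac{B^2}{\Delta_i^2}(\log(m/\delta)+2\log t)=B^2 h(\log(m/\delta)+2\log t)$, and a standard fixed-point argument (solve $t=\Theta(B^2h\log t)$ to get $t=O(B^2h\log(Bh/\delta))$) yields the first term $O(B^2h\log(Bh/\delta))$. The $\delta$-independent piece contributes the same constant gap $O\!\bigl(\sum_i\frac{B^3C}{\lambda^2\Delta_i^3}\log^2\frac{\beta BC}{\lambda\Delta_i}\bigr)$ as in the racing case.

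The main obstacle — which is really a bookkeeping issue rather than a conceptual one — is verifying that the three separate error contributions from Theorem~\ref{Theorem_NE} (estimation of $\mu_i$, estimation of $\sigma_i^2$, and the direct total-variation estimate via Algorithm~\ref{Algorithm_Esti_U}) combine into the same asymptotic form as Theorem~\ref{Theorem_CDUT}, since each of the $\mu_i$- and $\sigma_i^2$-estimation pieces only adds constants proportional to $\sigma_{\max}^2,\sigma_{\min}^{-4}$ in front of the $\log(1/\delta)/\Delta^2$ term and hence is swallowed by $B^2$. Once this is noted, the proof reduces to the same calculation already carried out for Corollary~\ref{Proposition_3}, and no further argument is required.
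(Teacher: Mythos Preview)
Your proposal is correct and matches the paper's approach: the paper states Corollary~\ref{Proposition_4} immediately after Theorem~\ref{Theorem_NE} with no proof, treating it as the routine analogue of Corollary~\ref{Proposition_3} obtained by plugging $n_H(\delta,\Delta)$ into Propositions~\ref{Prop_Elim_F} and~\ref{Prop_LUCB_F}. Your observation that the $\sigma_{\min},\sigma_{\max}$-dependent coefficient in front of $\log(1/\delta)/\Delta^2$ plays the role of $B^2$ (indeed, in this application $H(D)=-\Lambda_{\it TV}(D,\mathcal{N}(\mu,\sigma^2))$ has Lipschitz constant $B=1$, and the extra variance-estimation terms only inflate the constant) is exactly the bookkeeping needed to recover the stated form.
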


Compare with prior works, using total variation distance can be more accurate than using Kolmogorov-Smirnov distance. This is because that $\Lambda_K(D,D') \le \Lambda_{\it TV}(D,D')$, i.e., a small total variation distance leads to a small Kolmogorov-Smirnov distance, but not vice versa.

\section{Future Work}

In this paper, we 
concentrate on dealing with the fixed-confidence problem. Therefore, a possible further research topic is the fixed-budget model (based on general reward functions).
In \cite{Audibert2010Best}, the authors use an elimination-based policy called SR
to solve the fixed-budget problem. We believe that it is not hard to
develop algorithms for fixed-budget problem based on the their algorithm framework and analysis in this paper.

Another interesting topic is to extend the pure exploration bandit problem with general reward functions to combinatorial setting, i.e., to find out a set of arms $S \subseteq A$ such that the joint distribution of arms in $S$ has the largest score (under a general reward function $H$). For example, let $R(S) = r(\bm{X},S)$ denote the random reward of a set of arms $S$, where $\bm{X}$ is a random vector that follows the fixed joint distribution, and our goal is to find out the best set of arms $S$ such that $R(S)$ has the largest $\tau$-quantile. This setting is common in online systems that follow some special combinatorial structures, e.g., recommendation websites and search engines. A further research about this topic can be really helpful for algorithm design in these applications. 

\bibliographystyle{icml2018}     
\bibliography{mlj}

\appendix

\section{Proofs of Propositions and Lemmas}

\subsection{Proof of Proposition \ref{Prop_Elim_F}}

{\PropOne*}

\begin{proof}

Let $\mathcal{E}_{R}$ be the event that 
\begin{equation*}
 \{\forall k, \forall i \in A(k), |H(D_i) - \hat{H}_i(k)| \le \rad(k)\},
\end{equation*}
where $A(k)$ is the set of remaining arms at the end of phase $k$. By definition of $n_H(\delta,\Delta)$

\begin{equation*}
\Pr[\mathcal{E}_{R}] \ge 1 - \sum_{i=1}^m \sum_{k=1}^\infty \delta(k) = 1 - \sum_{k=1}^\infty {\delta \over 2k^2} \ge 1 - \delta.
\end{equation*}

Then we prove that under event $\mathcal{E}_{R}$, Algorithm \ref{Algorithm_Elim_F} works correctly.

If arm $i^*$ is deleted, then $\exists k \ge 1, i \ne i^*$ such that $\hat{H}_{i^*}(k)\le \hat{H}_i(k) - 2\rad(k)$. However, under event $\mathcal{E}_{R}$, we always have $\hat{H}_{i^*}(k) \ge H(D_{i^*}) - \rad(k) > H(D_i) - \rad(k) \ge \hat{H}_i(k) - 2\rad(k)$, which makes a contradiction.


Next, we consider the complexity under $\mathcal{E}_{R}$. When $\rad(k) < {\Delta_i \over 4}$, we have $\hat{H}_{i^*}(k) -2\rad(k) \ge H(D_{i^*}) - 3\rad(k) = H(D_i) +\Delta_i -  3\rad(k) \ge \hat{H}_i(k) - 4\rad(k) + \Delta_i > \hat{H}_i(k)$, thus at the end of phase $k$, arm $i$ is deleted. 

This implies that any sub-optimal arm $i\ne i^*$ will be deleted when $\rad(k) < {\Delta_i \over 4}$. Thus, it can remain in $A$ at most for $\log {\Delta_i\over 8}$ rounds. 
Thus the time steps we used on it has upper bound $n_H\left({\delta \over 2m\log^2\left(8/\Delta_i\right)}, {\Delta_i \over 8}\right)$. As for the optimal arm $i^*$, we need to pull it until the last arm is eliminated. From the definition of $\Delta_{i^*}$, we know that it is upper bounded by $n_H\left({\delta \over 2m\log^2\left(8/\Delta_{i^*}\right)}, {\Delta_{i^*} \over 8}\right)$ as well.  \hspace*{\fill}$\qed$
\end{proof}

\subsection{Proof of Proposition \ref{Prop_LUCB_F}}

{\PropTwo*}

\begin{proof}
Let $\mathcal{E}_{L}$ be the event that 
\begin{equation*}
 \{\forall t, \forall i, |H(D_i) - \hat{H}_i(t)| \le \rad_i(t)\}.
\end{equation*}

By the definition of $\Delta_H(\delta,n)$, we have that 

\begin{equation*}
\Pr[\mathcal{E}_{L}] \ge 1 - \sum_{i=1}^m \sum_{t=1}^\infty {\delta \over 2mt^2} \ge  1 -  \sum_{t=1}^\infty {\delta \over 2t^2} \ge 1 - \delta.
\end{equation*}

Then we prove that under event $\mathcal{E}_{L}$, Algorithm \ref{Algorithm_LUCB_F} works correctly.

If the game stops with output not $i^*$, then $\exists t, i \ne i^*$ such that $\hat{H}_{i^*}(t) + \rad_{i^*}(t) \le \hat{H}_i(t) - \rad_i(t)$. However, under event $\mathcal{E}_{L}$, we always have $\hat{H}_{i^*}(t) + \rad_{i^*}(t) \ge H(D_{i^*}) > H(D_i)  \ge \hat{H}_i(t) - \rad_i(t)$, which makes a contradiction.


Now we come to the complexity under $\mathcal{E}_L$.
If the game does not stop at time $t$, then there are three possible cases: i) $i_1(t) = i$ and $i_2(t) = i^*$; ii) $i_1(t) = i^*$ and $ i_2(t) = i$; iii) $i_1(t) = i$ and $i_2(t) = j$.

In the first case, $i_1(t) = i$ implies that $\hat{H}_i(t) \ge \hat{H}_{i^*}(t)$, then we must have $H(D_i) + \rad_i(t) \ge \hat{H}_i(t) \ge \hat{H}_{i^*}(t) \ge H(D_{i^*})  - \rad_{i^*}(t)$. This means that $\rad_i(t) + \rad_{i^*}(t) \ge \Delta_i$. If we choose to pull arm $i$, then $2\rad_i(t) \ge \Delta_i$. Otherwise, if we choose to pull arm $i^*$, then $2\rad_{i^*}(t) \ge \Delta_i \ge \Delta_{i^*}$.

In the second case, $i_1(t) = i^*$ and $i_2(t) = i$ but the game does not stop at $t$ implies that $\hat{H}_i(t) + \rad_i(t) \ge \hat{H}_{i^*}(t) - \rad_{i^*}(t) $, then we must have $H(D_i) + 2\rad_i(t) \ge \hat{H}_i(t) + \rad_i(t) \ge \hat{H}_{i^*}(t) - \rad_{i^*}(t) \ge H(D_{i^*})  - 2\rad_{i^*}(t)$. This means that $2\rad_i(t) + 2\rad_{i^*}(t) \ge \Delta_i$. If we choose to pull arm $i$, then $4\rad_i(t) \ge \Delta_i$. Otherwise, if we choose to pull arm $i^*$, then $4\rad_{i^*}(t) \ge \Delta_i \ge \Delta_{i^*}$.

In the third case, $i_2(t) = j$ implies that $ \hat{H}_{j}(t) + \rad_j(t) \ge \hat{H}_{i^*}(t) + \rad_{i^*}(t)$, then we must have $H(D_j) + 2\rad_j(t)\ge \hat{H}_{j}(t) + \rad_j(t) \ge \hat{H}_{i^*}(t) + \rad_{i^*}(t) \ge H(D_{i^*})$. This means that if we choose to pull arm $j$, then $2\rad_{j}(t) \ge \Delta_j$. If we choose to pull arm $i$, then we must have $\rad_i(t) \ge \rad_j(t)$. Notice that $i_1(t) = i$ means that $\hat{H}_{i}(t) \ge \hat{H}_{j}(t)$. Then we know that $ \hat{H}_{i}(t) + \rad_i(t) \ge \hat{H}_{j}(t) + \rad_j(t) \ge \hat{H}_{i^*}(t) + \rad_{i^*}(t)$. By the same reason, we can obtain $2\rad_{i}(t) \ge \Delta_i$.



Thus we only pull arm $i$ when $\rad_i(t) \ge {\Delta_i \over 4}$. Under event $\mathcal{E}_{L}$, if $\forall i, \rad_i(t) < {\Delta_i \over 4}$, then we will output the correct arm $i^*$. This means that the complexity $T_L \le \min_{t > 0} \{t > \sum_{i=1}^m n_H({\delta \over 2mt^2}. {\Delta_i\over 4})\}$. \hspace*{\fill}$\qed$
\end{proof}

\subsection{Proof of Lemma \ref{Lemma_NE}}

{\LemmaOne*}

\begin{proof}
Note that $\Lambda_{\it TV}(\mathcal{N}(\hat{\mu}_i, \sigma^2_i), \mathcal{N}(\hat{\mu}_i, \hat{\sigma}^2_i)) = \Lambda_{\it TV}(\mathcal{N}(0, \sigma^2_i), \mathcal{N}(0, \hat{\sigma}^2_i))$.

Let $\Delta = |\sigma^2_i - \hat{\sigma}^2_i|$, and without loss of generality we suppose that $\sigma^2_i \le \hat{\sigma}^2_i$, then we can see that $\Lambda_{\it TV}(\mathcal{N}(0, \sigma^2_i), \mathcal{N}(0, \hat{\sigma}^2_i)) = \Lambda_{\it TV}(\mathcal{N}(0, 1), \mathcal{N}(0, 1 + {\Delta \over \sigma^2_i }))$, denote $s = 1 + {\Delta \over \sigma^2_i }$, and $f_s(x)$ the probability density function of distribution $\mathcal{N}(0, s)$, then according to the definition of total variation distance, $\Lambda_{\it TV}(\mathcal{N}(0, 1), \mathcal{N}(0, s)) = \int_{x: f_1(x) \ge f_s(x)} (f_1(x) - f_s(x)) dx$. 

It is easy to see that $\{x: f_1(x) \ge f_s(x)\} = [- g(s), g(s)]$, where $g(s) > 0$ satisfies that $f_1(g(s)) = f_s(g(s))$. Then from the symmetry of $f_1(x)$ and $f_s(x)$, one can see that  $\Lambda_{\it TV}(\mathcal{N}(0, 1), \mathcal{N}(0, s)) = 2\int_0^{g(s)} (f_1(x) - f_s(x)) dx$.

Note that 
\begin{eqnarray*}
\int_0^{g(s)} (f_1(x) - f_s(x)) dx &=& \int_0^{g(s)} f_1(x)  dx - \int_0^{g(s)} f_s(x) dx \\
&=& \int_0^{g(s)} f_1(x)  dx - \int_0^{g(s)/\sqrt{s}} f_1(x) dx \\
&=&  \int_{g(s)/\sqrt{s}} ^{g(s)} f_1(x)  dx \\
&\le& \int_{g(s)/\sqrt{s}} ^{g(s)} f_1(0)  dx \\
&\le& f_1(0) (g(s) - {g(s)/\sqrt{s}} ) \\
&=& {1\over \sqrt{2\pi}} \left( g(s){\sqrt{s} - 1 \over \sqrt{s}}\right).
\end{eqnarray*}

Now we come to bound $\left( g(s){\sqrt{s} - 1 \over \sqrt{s}}\right)$. Since $f_1(g(s)) = f_s(g(s))$, we have that ${1\over \sqrt{2\pi s}}\exp({-g^2(s) \over 2s}) = {1\over \sqrt{2\pi}}\exp({-g^2(s) \over 2})$. Thus $\exp({(s-1)g^2(s) \over s}) = s$, which means that $g^2(s) = {s\log s \over s-1}$.

Then 
\begin{eqnarray*}
\left( g(s){\sqrt{s} - 1 \over \sqrt{s}}\right)^2 &=& {s\log s \over s-1}{(\sqrt{s} - 1)^2 \over s} \\
&=& \log s{\sqrt{s} - 1 \over \sqrt{s} + 1} \\
&\le& (s-1){1+(s-1)/2 - 1 \over 2} \\
&=& {\Delta \over \sigma^2_i }\cdot {\Delta \over 4\sigma^2_i }\\
&\le& {\Delta^2 \over 4\sigma^4_{\min}}.
\end{eqnarray*}

Thus $\left( g(s){\sqrt{s} - 1 \over \sqrt{s}}\right) \le {\Delta \over 2\sigma^2_{\min} }$. This implies that $\Lambda_{\it TV}(\mathcal{N}(\hat{\mu}_i, \sigma^2_i), \mathcal{N}(\hat{\mu}_i, \hat{\sigma}^2_i)) \le 2 \cdot {1\over \sqrt{2\pi}}  \cdot {\Delta \over 2\sigma^2_{\min}} = {\Delta \over \sqrt{2\pi}\sigma^2_{\min} } $, which finish the proof of this lemma.  \hspace*{\fill}$\qed$
\end{proof}

\section{The DKW Inequality}

The Dvoretzky-Kiefer-Wolfowitz inequality is used to bound the Kolmogorov-Smirnov distance between the empirical distribution and the real distribution, which is first proposed by \cite{dvoretzky1956asymptotic} and then improved by \cite{massart1990tight}. 

\begin{proposition}
(Dvoretzky-Kiefer-Wolfowitz inequality) Let $X_1, X_2, \cdots, X_n$ be  i.i.d. random variables with cumulative distribution function $F$, and let $F_n$ denote the cumulative distribution function of the empirical distribution, i.e., $F_n(x) = {1\over n}\sum_{i=1}^n \I[X_i \le x]$, then
\begin{equation*}
\Pr\left[\sup_x |F_n(x) - F(x)| \ge \epsilon \right] \le 2e^{-2n\epsilon^2}.
\end{equation*}
\end{proposition}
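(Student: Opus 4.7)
The plan is to prove this distribution-free bound via the classical three-step route culminating in Massart's sharp constant. My first step would be a reduction to the uniform case. Assuming $F$ is continuous (the general case follows by approximating $F$ by continuous CDFs and taking a limit), I would set $U_i = F(X_i)$, which are i.i.d.\ uniform on $[0,1]$. Writing $G_n(t) = \frac{1}{n}\sum_{i=1}^n \I[U_i \le t]$, monotonicity of $F$ gives
\[
\sup_x |F_n(x) - F(x)| \;=\; \sup_{t \in [0,1]} |G_n(t) - t|,
\]
so the problem reduces to showing $\Pr[\sup_{t \in [0,1]} |G_n(t) - t| \ge \epsilon] \le 2e^{-2n\epsilon^2}$.

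The core of the argument is the one-sided bound $\Pr[\sup_t (G_n(t)-t) \ge \epsilon] \le e^{-2n\epsilon^2}$. At each fixed $t$, $nG_n(t)$ is Binomial$(n,t)$ and the Chernoff/Hoeffding inequality already yields $\Pr[G_n(t)-t \ge \epsilon] \le e^{-2n\epsilon^2}$ pointwise. The challenge is to upgrade this to a uniform-in-$t$ bound without losing any constant factor. I would follow Massart by introducing an exponential process indexed by the order statistics $U_{(1)} \le \cdots \le U_{(n)}$: for a suitably chosen $\lambda > 0$, one considers a random sequence of the form $M_k = \exp\bigl(\lambda(k - nU_{(k)})\bigr)$ normalized by a deterministic factor depending on $k,n,\lambda$, and exploits the exchangeability of the uniform sample together with a sharp conditional estimate on the spacings of $U_{(k)}$ to verify the supermartingale property. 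Doob's maximal inequality applied to this supermartingale, followed by optimization over $\lambda$, delivers the tight constant $2$ in the exponent.

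The two-sided bound then follows by symmetry: the same argument applied to $1 - G_n(1-t)$ gives $\Pr[\sup_t (t - G_n(t)) \ge \epsilon] \le e^{-2n\epsilon^2}$, and a union bound over the two one-sided events produces the factor $2$ in front of $e^{-2n\epsilon^2}$. Undoing the initial reduction to uniforms returns the stated inequality for arbitrary $F$.

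The hard part is unambiguously the one-sided inequality with the sharp constant $2$. A naive union bound over a fine discretization of $[0,1]$, or a VC-dimension argument for the class of half-lines $\{(-\infty,t]\}$ (which has VC dimension $1$), only produces either a prefactor polynomial in $n$ or a strictly weaker constant in the exponent. Massart's martingale argument is delicate precisely because one must locate exactly the right exponential process on the order statistics and control its conditional expectation to leading order in $\lambda$; this is the step that would absorb essentially all of the technical work. The reduction to uniforms and the symmetrization step are by comparison routine bookkeeping once the sharp one-sided inequality is in hand.
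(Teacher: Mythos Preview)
Your outline is a faithful sketch of Massart's argument for the sharp DKW constant, and nothing in it is wrong as a proof strategy. However, the paper does not prove this proposition at all: it states the inequality as a known result and simply attributes it to \cite{dvoretzky1956asymptotic} (for the original form with an unspecified constant) and \cite{massart1990tight} (for the sharp constant $2$). There is no argument in the paper to compare against; the proposition functions purely as a citation to an external tool used in Section~\ref{Section_LC}.

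So the ``difference'' is not one of approach but of intent: you are proposing to reprove a classical theorem, whereas the paper treats it as a black box. If you want your write-up to match the paper, a one-line citation suffices. If you want to include a proof, your plan is the right one, with the caveat that the supermartingale construction you allude to is genuinely intricate and would require several pages to execute carefully; you have correctly identified that step as the only substantive difficulty.
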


%
%

\end{document}